\crefname{definition}{definition}{definitions}
\Crefname{definition}{Definition}{Definitions}
\crefname{prop}{proposition}{propositions}
\Crefname{Prop}{Proposition}{Propositions}
\Crefname{cor}{Corollary}{Corollaries}
\crefname{lemma}{Lemma}{Lemmas}
\crefname{section}{Section}{Sections}
\crefname{subsubsubsection}{Section}{Sections}
\crefname{remark}{Remark}{Remarks}
\crefname{figure}{Fig.}{Figs.}
\crefname{fig}{Fig.}{Figs.}
\crefname{table}{Table}{Tables}
\crefname{lemma}{lemma}{lemmas}
\Crefname{Lemma}{Lemma}{Lemmas}
\crefname{theorem}{Theorem}{Theorems}
\Crefname{theorem}{Theorem}{Theorems}
\crefname{algo}{Algorithm}{Algorithms}
\crefname{assumption}{assumption}{assumptions}
\Crefname{assumption}{Assumption}{Assumptions}
\crefname{example}{example}{examples}
\Crefname{example}{Example}{Examples}
\crefname{remark}{remark}{remarks}
\Crefname{remark}{Remark}{Remarks}
\definecolor{darkspringgreen}{rgb}{0.09, 0.45, 0.27}
\theoremstyle{plain}
\newtheorem{theorem}{Theorem} 
\newtheorem{proposition}{Proposition}
\newtheorem{lemma}{Lemma}
\newtheorem{definition}{Definition}
\newcommand\given[1][]{\:#1\vert\:}
\DeclareMathOperator{\PDim}{PDim}
\DeclareMathOperator{\VCDim}{VCDim}
\def \cD {\mathcal{D}}
\def \cF {\mathcal{F}}
\def \cG {\mathcal{G}}
\def \cH {\mathcal{H}}
\def \cU {\mathcal{U}}
\def \cX {\mathcal{X}}
\definecolor{grey}{rgb}{0.8,0.8,0.8}
\newcommand{\hf}[1]{{\color{red}  [\text{Haifeng:} #1]}}
\def\min{\qopname\relax n{min}}
\def\max2{\qopname\relax n{max2}}
\def\max{\qopname\relax n{max}}
\def\Pr{\qopname\relax n{\mathbf{Pr}}}
\def\Ex{\qopname\relax n{\mathbf{E}}}
\newcommand{\EE}{\mathbb{E}}
\newcommand{\II}{\mathbb{I}}
\newcommand{\RR}{\mathbb{R}}
\def\C{\mathcal{C}}
\def\D{\mathcal{D}}
\def\F{\mathcal{F}}
\def\G{\mathcal{G}}
\def\H{\mathcal{H}}
\def\L{\mathcal{L}}
\def\M{\mathcal{M}}
\def\N{\mathcal{N}}
\def\P{\mathcal{P}}
\def\O{\mathcal{O}}
\def\X{\mathcal{X}}
\def\Y{\mathcal{Y}}
\def\eps{\varepsilon}
\def \cD {\mathcal{D}}
\def \cF {\mathcal{F}}
\def \cG {\mathcal{G}}
\def \cH {\mathcal{H}}
\def \cX {\mathcal{X}}
\def\x{\bm{x}} 
\def\y{\bm{y}} 
\def\w{\mathbf{w}} 
\def\c{\mathbf{c}} 
\def\e{\mathbf{e}}
\def\f{\bm{f}}
\def\e{\bm{e}} 
\def\u{\bm{u}}
\def\p{\bm{p}} 
\def\w{\bm{w}} 
\def\y{\bm{y}} 
\def\z{\bm{z}}
\newcommand{\find}[1]{\mbox{find} & {#1 } & \\}
\newcommand{\stt}{\mbox{subject to} }
\newcommand{\con}[1]{&#1 & \\}
\newenvironment{lp}{\begin{equation}  \begin{array}{lll}}{\end{array}\end{equation} }
\newenvironment{lp*}{\begin{equation*}  \begin{array}{lll}}{\end{array}\end{equation*}}
\newcommand{\err}{\text{err}}
\crefname{definition}{definition}{definitions}
\Crefname{definition}{Definition}{Definitions}
\crefname{prop}{proposition}{propositions}
\Crefname{Prop}{Proposition}{Propositions}
\Crefname{cor}{Corollary}{Corollaries}
\crefname{lemma}{Lemma}{Lemmas}
\crefname{lem}{Lemma}{Lemmas}
\Crefname{Lemma}{Lemma}{Lemmas}
\crefname{section}{Section}{Sections}
\crefname{subsubsubsection}{Section}{Sections}
\crefname{sec}{Section}{Sections}
\crefname{remark}{Remark}{Remarks}
\crefname{figure}{Figure}{Figures}
\crefname{fig}{Figure}{Figures}
\crefname{table}{Table}{Tables}
\crefname{theorem}{Theorem}{Theorems}
\Crefname{theorem}{Theorem}{Theorems}
\crefname{alg}{Algorithm}{Algorithms}
\crefname{assumption}{assumption}{assumptions}
\Crefname{assumption}{Assumption}{Assumptions}
\crefname{example}{example}{examples}
\Crefname{example}{Example}{Examples}
\crefname{remark}{remark}{remarks}
\Crefname{remark}{Remark}{Remarks}
\crefname{ineq}{inequality}{inequalities}
\Crefname{Ineq}{Inequality}{Inequalities}
\newcommand\saba[1]{\textcolor{blue}{[Saba]: #1}}
\title{Strategic Filtering for Content Moderation: \\ Free Speech or Free of Distortion?}
\author{%
  Saba Ahmadi \\
  TTIC \\
  \texttt{saba@ttic.edu} \\
  \And
  Avrim Blum \\
  TTIC \\
  \texttt{avrim@ttic.edu} \\
  \And
  Haifeng Xu \\
  University of Chicago\\
  \texttt{haifengxu@uchicago.edu} \\
  \And 
  Fan Yao \\
  UNC-Chapel Hill \\
  \texttt{fanyao@unc.edu} \\
}
\begin{document}

\maketitle

\begin{abstract}
User-generated content (UGC) on social media platforms is vulnerable to incitements and manipulations, necessitating effective regulations. To address these challenges, those platforms often deploy automated content moderators tasked with evaluating the harmfulness of UGC and filtering out content that violates established guidelines. However, such moderation inevitably gives rise to strategic responses from users, who strive to express themselves within the confines of guidelines. Such phenomena call for a careful balance between: 1. ensuring freedom of speech --- by minimizing the restriction of expression; and 2. reducing social distortion --- measured by the total amount of content manipulation. We tackle the problem of optimizing this balance through the lens of mechanism design, aiming at optimizing the trade-off between minimizing social distortion and maximizing free speech. Although determining the optimal trade-off is NP-hard, we propose practical methods to approximate the optimal solution. Additionally, we provide generalization guarantees determining the amount of finite offline data required to approximate the optimal moderator effectively.
\end{abstract}

\section{Introduction}

The Internet has fostered a global ecosystem of social interaction, where social media plays a central role. People use platforms to connect with others, engage with news content, share information, and entertain themselves. Yet, the nature of online content and interactions has increasingly raised concerns among policymakers. Social media can be weaponized to advance extremist causes or disseminate misinformation and fake news. For example, social bots were used to manipulate public discourse during the 2016 U.S. presidential election \citep{bessi2016social,boichak2018automated}. Another growing concern is cyberbullying, which disproportionately targets public figures and targeted groups \citep{o2020hate,west2014cyber,michael2024cyberbullying}. In response to these issues, several European governments have introduced regulations aimed at curbing fake news and hate speech on social platforms \citep{cauffman2021new}. These incidents highlight the need for robust content moderation mechanisms---policies or algorithmic tools implemented by platforms to limit abuse, promote healthy discourse, and enhance user experience \citep{roberts2018digital,barocas2016big}. However, while content moderation is crucial for managing platform integrity, it also raises significant concerns regarding censorship and freedom of expression \citep{shaughnessy2024attack,bollinger2022social,brannon2019free,young2022much}. Overly restrictive moderation may inadvertently suppress legitimate content and discourage diversity of expression. In this work, we investigate how to algorithmically balance the competing goals of mitigating harmful content and preserving free speech.

A further complication arises from users’ strategic responses to public moderation policies. It is widely observed that users actively engage with \emph{harmful social trends}---not only to gain visibility, but also to evade platform enforcement by subtly modifying their content. For instance, users often manipulate hashtags or obscure key terms to disguise the true nature of their posts \citep{gerrard2018beyond}. Inspired by recent work in strategic machine learning \citep{hardt2016strategic}, we model the interaction between platform moderators and users as a Stackelberg game, explicitly characterizing users' strategic behavior in response to moderation when engaging with harmful trends. The platform’s objective is to design content moderation mechanisms\footnote{Throughout the paper, we use the terms content moderator, platform, and filter interchangeably.} that minimize engagement with such trends while accounting for potential strategic manipulations, and to do so in a way that avoids unnecessary content removal and respects individual rights to free expression.

\subsection{Our Results and Contributions:}


\textbf{User agency modeling.} We propose a behavioral model that captures the agency of social media users engaging with social trends to gain visibility while staying within moderation boundaries, enabling platform designers to frame the optimal moderation as a mechanism design problem.

\textbf{Optimizing the trade-off.} We reduce the mechanism design problem of balancing the goals of curbing harmful trends and protecting free speech to a constrained optimization formulation with offline data. We then propose an effective approximation method to compute the optimal content filter and validate its effectiveness through experiments on synthetic environments.

\textbf{Statistical and computational hardness results.} We provide two key theoretical results: (1) a sample complexity bound (\Cref{thm:generalization_bound}) showing that solving the problem using a polynomial number of offline samples leads to good generalization performance across a broad class of filter functions (\Cref{prop:pdim_l_sd}); and (2) a computational hardness result establishing that even for the simple linear function class, finding the optimal content moderator is NP-hard (\Cref{thm:NP_hardness}). This inherent computational complexity motivates our approximation-based approach.

\textbf{Conceptual contribution.} Beyond technical results, our work offers a key insight for welfare-driven platform designers: the best practice for mitigating harmful social trends is not to simply suppress associated content, but to define a moderation criterion with a carefully chosen threshold such that a large fraction of content lies near its decision boundary. We formalize this intuition, fully characterize the computational challenges, and propose practical approximation methods to address them.







\subsection{Related Work}

\textbf{Strategic ML.} Our work is related to the growing line of research on strategic ML that studies learning from data provided by strategic agents~\citep{dalvi2004adversarial,dekel2008incentive,bruckner2011stackelberg}. \cite{hardt2016strategic} introduced the problem of \emph{strategic classification} as a game between a mechanism designer that deploys a classifier and an agent that best responds to the classifier by modifying their features at a cost. Follow-up work studied different variations of this model, in a PAC-learning setting~\citep{sundaram2023pac}, online learning~\citep{dong2018strategic,chen2020learning,ahmadi2021strategic}, incentivizing agents to take improvement actions rather than gaming actions~\citep{kleinberg2020classifiers,haghtalab-ijcai2020,Alon2020MultiagentEM,ahmadi2022classification}, causal learning~\citep{bechavod2021gaming,pmlr-v119-perdomo20a}, 
fairness~\citep{10.1145/3287560.3287597}, etc.

\textbf{Welfare-oriented learning.} The key novelty of our work is a shift from mere predictive performance to welfare-oriented optimization in strategic learning, aligning with a recent call by \cite{rosenfeld2025machine}. The goal of standard strategic classification tasks is to preserve accuracy despite agents' manipulations while in our setting accuracy is not well-defined as there is no ground-truth labels for UGC. What we advocate is that the platform should directly optimize a welfare objective that captures the externalities of moderation decisions targeting harmful social trends. Our work also connects with a broader literature that examines the societal impacts of strategic classification. For example, \cite{milli2019social} highlights how counter-strategic defenses increase social burden, \cite{kleinberg2020classifiers} explores how classifiers can steer gaming behavior toward socially beneficial efforts, and \cite{yao2024user,yao2024unveiling} studies how recommender systems can improve user welfare in the presence of strategic content creators. We extend this perspective to a new and timely domain --- balancing social distortion and freedom of speech in content moderation.

\textbf{Content moderation in social media platforms.}
To detect abusive content, platforms use a mix of human moderators and automated algorithms. While early moderation relied heavily on human review~\citep{klonick2017new}, most platforms now employ automated filters to handle overtly inappropriate content~\citep{gillespie2018custodians}, with humans in the loop to address limitations such as poor generalization to out-of-distribution cases. In this work, we assume the harmful trend is known, e.g., election misinformation, and focus on designing mechanisms that discourage user engagement with such trends while preserving freedom of speech.

\section{Problem Setting}
Let $\cX\subset \RR^d$ denote the feature space of each user's generated content (UGC). Throughout the paper we assume that $\cX$ is convex and compact. Our problem formulation is built upon the interplay between a set of of $n$ users on a social media platform and an automated content moderator $\M$, which we elaborate on in the following.

\noindent
{\bf User representation:} A user indexed by $i$ is represented by a tuple $\u_i=(\x_i,c_i)$, where $\x_i\in\cX$ is the feature vector of $\u_i$'s generated content representing her original intention of expression and $c_i$ denotes the manipulation cost. 
We consider the case where the user wants to tweak the original message $\x_i$ to $\z_i$ to better align with a global ongoing social trend $\e$, at a marginal cost $c_i$. 

\noindent
{\bf Content moderator:}
The role of a content moderator $\M$ is to regulate published content, ensuring it adheres to platform guidelines. Without loss of generality, $\M$ can be regarded as an indicator function $\II[f(\x;\w) \leq 0]$, where $0$ indicates that the content is flagged as problematic and should be filtered, while $1$ indicates it is benign. For simplicity, we define the content moderator as the function $f(\x; \w): \RR^d \rightarrow \RR$, parameterized by $\w$, and refer to the set ${\x \in \X : f(\x;\w) \leq 0}$ as the benign region associated with $f$. The output of $f$ can be interpreted as a harmfulness score for each content. In this work, we will focus on moderators $f$ that induce convex benign regions. 



\noindent
{\bf User's strategic response: }
With the components outlined above, we can formulate a utility function to capture the potential strategic behavior of a user and predict her response when facing a moderator $f(\cdot;\w)$, given her profile $\u = (\x, c)$. \footnote{Our utility model connects to previous works, e.g., if we replace the term $\z^{\top} \e$ with a user-dependent preference parameter $r$, our model reduces to the agent utility proposed in \citep{sundaram2023pac}.} The following Eq. \eqref{eq:utility_general} characterizes the user's utility when modifying her published content from $\x$ to $\z$:

\begin{equation}\label{eq:utility_general}
    u(\z;(\x,c),\e, f) = \II[f(\z)\leq 0]\cdot \z^{\top}\e - c\|\z-\x\|^2,
\end{equation}
where the first term reflects the gain by aligning with trend $\e$ while not being filtered by $f$, and the second terms measures the manipulation cost. The following proposition \ref{prop:best_response_z} characterizes the property of the best response $\z^*$ that maximizes Eq. \eqref{eq:utility_general}.


\begin{proposition}\label{prop:best_response_z}
Denote the best response of user $\u=(\x,c)$ against a convex content moderator $f(\z;\w)$ by 
\begin{equation}\label{eq:br_def}
    \z^*=\Delta(\x,c;\e,f)=\arg\max_{\z\in\X} u(\z;(\x,c),\e,f),
\end{equation}
and let $\z'=\x+\frac{\e}{2c}$. Then $\z^*$ always exists and has the following characterizations:
\begin{enumerate}
    \item if $f\left(\z'\right)\leq 0$, $\z^*=\z'$.
    \item if $f\left(\z'\right)> 0$ and $f(\x)\leq 0$, $\z^*=\P_f\left(\z'\right)$, where $\P_f(\x)$ denotes the $\ell_2$ projection of $\x$ on to the hyperplane $\{\x\in\RR^d:f(\x)=0\}$.
    \item if $f\left(\z'\right)> 0$ and $f(\x)>0$, $\z^*=\x$ or $\P_f\left(\z'\right)$, depending on the location of $\x$.
\end{enumerate}
\end{proposition}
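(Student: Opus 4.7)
The plan is to split the maximization of $u(\z;(\x,c),\e,f)$ according to the value of $\II[f(\z)\leq 0]$, solve the resulting two sub-problems with convex-analytic tools, and stitch the outcomes together to recover the three cases. The key observation is that the ``in-region'' sub-problem (maximizing over $\{\z\in\X: f(\z)\leq 0\}$) can be rewritten, via completing the square, as
\begin{equation*}
\z^\top \e - c\|\z-\x\|^2 \;=\; -c\|\z-\z'\|^2 + \x^\top \e + \tfrac{\|\e\|^2}{4c},
\end{equation*}
so it reduces to minimizing $\|\z-\z'\|^2$ over the closed convex set $\{\z\in\X: f(\z)\leq 0\}$ (convexity being guaranteed by the standing assumption on $f$). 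By the projection theorem, a unique minimizer exists: it equals $\z'$ itself if $f(\z')\leq 0$, and otherwise it lies on the boundary $\{f(\z)=0\}$ and coincides with $\P_f(\z')$.

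For the complementary ``out-of-region'' sub-problem (maximizing over $\{f(\z)>0\}$), the indicator kills the $\z^\top\e$ term and the objective collapses to $-c\|\z-\x\|^2$, which is nonpositive. Its supremum is $0$, attained at $\z=\x$ exactly when $f(\x)>0$; when $f(\x)\leq 0$ the supremum is only a limit and is dominated by the in-region sub-problem's value at $\x$ itself, because $\x$ is already feasible for the in-region problem in that case and yields utility $\x^\top\e$, which a short direct inequality shows beats any admissible value of $-c\|\z-\x\|^2$ approachable from the strictly exterior side.

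Finally, I would assemble the three cases. In Case 1 ($f(\z')\leq 0$), the in-region maximum is attained at $\z'$ and exceeds any out-of-region value, giving $\z^*=\z'$. In Case 2 ($f(\z')>0$, $f(\x)\leq 0$), the in-region problem is attained at $\P_f(\z')$ and strictly dominates the non-attained out-of-region supremum, so $\z^*=\P_f(\z')$. In Case 3 ($f(\z')>0$, $f(\x)>0$), the in-region optimum is still $\P_f(\z')$, but the out-of-region sub-problem is now attained at $\z=\x$ with utility $0$; the overall maximizer is whichever of the two yields the larger utility. I expect the main obstacle to be Case 3: one must justify that the only candidate maximizers are $\x$ and $\P_f(\z')$, rule out interior improvements in either region, and explicitly argue that the comparison genuinely depends on the location of $\x$ relative to the benign boundary, which is precisely why a single closed-form answer is not available and the dichotomy in the statement is unavoidable.
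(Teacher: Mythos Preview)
Your proposal is correct and follows essentially the same route as the paper: complete the square to reduce the in-region maximization to an $\ell_2$ projection of $\z'$ onto the convex benign set $\{f\leq 0\}$, then handle the three cases by comparing with the out-of-region alternative. You are in fact slightly more thorough than the paper, which, when $\x\in\D$, simply asserts that the best response solves the constrained problem over $\D$ without explicitly discussing the exterior subproblem.
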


\begin{figure}[H]
\label{fig:surrogate_loss_1}
\centering
\begin{tikzpicture}[scale=0.75]
    \draw[line width=0.5mm] (1,-0.5) .. controls (2,0.5) and (4,0.5) .. (5, -0.5);
    \node[above] at (0.9,0) {$f$};
    \node[above] at (1.5,0.25) {$+$};
    \node[below] at (1.5,-0.25) {$-$};
    \filldraw (2,-1.05) circle (1pt) node[above] {$\x$};
    \filldraw (4,-0.3) circle (1pt) node[below] {$\z^*$};
    \draw[dashed,->,line width=0.25mm] (2,-1.05)--(4,-0.3);
    \node[above] at (3,-0.55) {$\frac{\e}{2c}$};

    \draw[line width=0.5mm] (6,-0.5) .. controls (7,0.5) and (9,0.5) .. (10, -0.5);
    \node[above] at (5.9,0) {$f$};
    \node[above] at (6.5,0.25) {$+$};
    \node[below] at (6.5,-0.25) {$-$};
    \filldraw (7,-0.4) circle (1pt) node[below] {$\x$};
    \filldraw (8.88,0.13) circle (1pt) node[below] {$\z^*$};
    \draw[dashed,->,line width=0.25mm] (7,-0.4)--(8.88,0.13);
    \filldraw (9,0.6) circle (1pt) node[above] {$\z'$};
    \draw[dashed,->,line width=0.25mm] (7,-0.4)--(9,0.6);
    \node[above] at (8,0.2) {$\frac{\e}{2c}$};
    \draw[dashed] (8.88,0.13)--(9,0.6);

    \draw[line width=0.5mm] (11,-0.5) .. controls (12,0.5) and (14,0.5) .. (15, -0.5);
    \node[above] at (10.9,0) {$f$};
    \node[above] at (10.5,0.25) {$+$};
    \node[below] at (10.5,-0.25) {$-$};
    \filldraw (12,0.2) circle (1pt) node[above] {$\x(=\z^*)$};
    \filldraw (14,0.905) circle (1pt) node[below] {$\z'$};
    \draw[dashed,->,line width=0.25mm] (12,0.2)--(14,0.905);
    \node[above] at (13,0.6) {$\frac{\e}{2c}$};
\end{tikzpicture}
\caption{Illustration of best response $\z^*$. Left: if $\z'=\x+\frac{\e}{2c}$ is benign, $\z^*=\z'$. Middle: if $\x$ is benign but $\z'$ is problematic, $\x$ moves to the projection of $\z'$ on $f$. Right: if $\x$ is already problematic, $\z^*=\x$, or the projection of $\z'$ on $f$, depending on which one yields a higher utility.}
\end{figure}
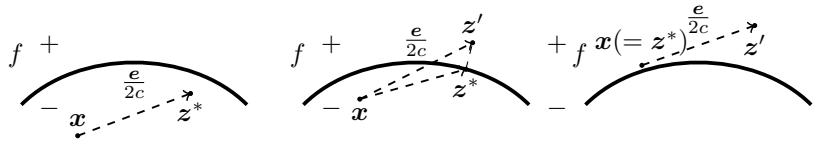

Proposition \ref{prop:best_response_z} reveals a two-level response pattern. In the first level, each content $\x$ tends to shift towards an idealized location $\z' = \x + \frac{\e}{2c}$, manipulating its features in the trending direction $\e$ by an amount determined by the cost. Such $\z'$ is also the user's preferred manipulation result in the absence of any moderation. The second level can be viewed as a self-correction process starting from $\z'$: if $\z'$ is accepted by the moderator $f$, it becomes the user's final response; however, if $\z'$ is flagged as problematic by $f$, the user would adjust it to the closest point on $f$'s decision boundary, ensuring minimal alteration while still complying with the platform's guidelines. The proof of Proposition \ref{prop:best_response_z} is deferred to Appendix \ref{app:proof_prop_1}. Proposition \ref{prop:best_response_z} highlights the role of content moderation in reducing distortions introduced by the trending direction $\e$, which may deviate from users' true expressive intent, represented by $\x$. For UGC near the filter boundary, moderation can mitigate distortion by incentivizing users to align their content with platform guidelines. Thus, the platform can intuitively reduce overall social distortion by encouraging more UGC to move closer to this boundary. However, this strategy comes with a trade-off: the risk of filtering out certain UGC, potentially infringing on users' freedom of expression. This presents a key challenge for the platform—how to balance reducing social distortion with preserving free speech. In the following section, we formalize this problem and explore its complexity and possible solutions.


\section{The Social Distortion, Freedom of Speech, and their Trade-off}\label{sec:sd_computation}

In this section, we formally introduce the concept of social distortion and explain how content moderation reduces social distortion but at the potential cost of infringing on freedom of speech. From Proposition \ref{prop:best_response_z}, we observe two major effects of deploying $f$: first, it discourages users from excessively following social trend $\e$, which is beneficial; second, it may flag some UGC as harmful, potentially leading to user churn. The first effect can be quantified by the displacement of users who were initially on the benign side. The second effect can be assessed by the proportion of UGC remaining on the platform, serving as an index for freedom of speech. The following Definition \ref{def:social_distortion} formally introduces the concept of social distortion (SD):

\begin{definition}\label{def:social_distortion}
The social distortion (SD) of moderator $f$ induced on a user $(\x,c)$ is defined as
    \begin{equation}\label{eq:social_distortion_individual}
       D(f;(\x,c),\e)= \left\{
    \begin{aligned}
        & \|\x-\Delta(\x,c;\e,f)\|^2_2, \quad \text{if} \quad f(\x)\leq 0,\\
        & 0, \quad \text{otherwise}.
    \end{aligned}
    \right.
    \end{equation}
\end{definition}

The social distortion function $D$ defined in Eq. \eqref{eq:social_distortion_individual} quantifies the manipulation effort of $\x$, which measures how much the user's strategic adaptation diverges from her true expressive intent. Importantly, our definition of social distortion applies only to UGC whose original representation $\x$ is not filtered by $f$. This is because the strategic behavior of users with $f(\x) > 0$ does not contribute to the distortion negatively: as Proposition \ref{prop:best_response_z} suggests, users with $f(\x) > 0$ are either filtered out---meaning their content is not distorted---or they adjust $\x$ to align with platform guidelines, which is considered a beneficial manipulation and thus should not be counted as distortion. Following Definition \ref{def:social_distortion}, an immediate observation is that for any user $(\x, c)$, deploying $f$ does not increase social distortion relative to an unmoderated environment, as substantiated by the following Proposition \ref{prop:property_sd}.


\begin{proposition}\label{prop:property_sd}
Let $\perp$ denote a trivial moderator that marks all $\x\in\X$ as benign. It holds that
\begin{equation}
    D(f;(\x,c),\e)\leq D(\perp;(\x,c),\e),
\end{equation}
and the inequality holds strictly if and only if $f(\x)\leq 0<f(\x+\frac{\e}{2c})$.
\end{proposition}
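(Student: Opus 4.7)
The plan is to compute $D(\perp;(\x,c),\e)$ in closed form and then compare $D(f;(\x,c),\e)$ against it by a case split on whether $\x$ and $\z' \defeq \x + \frac{\e}{2c}$ lie inside the benign region of $f$.

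Since $\perp$ marks every point as benign, Case~1 of \Cref{prop:best_response_z} gives $\Delta(\x,c;\e,\perp)=\z'$, and so \Cref{def:social_distortion} yields $D(\perp;(\x,c),\e)=\|\x-\z'\|^2 = \tfrac{\|\e\|^2}{4c^2}$. I would then match the three cases of \Cref{prop:best_response_z} to compute $D(f;(\x,c),\e)$: (i) if $f(\x)>0$, then $D(f;(\x,c),\e)=0$ by definition and the inequality is trivial; (ii) if $f(\x)\le 0$ and $f(\z')\le 0$, then $\Delta=\z'$ so $D(f;(\x,c),\e)=\|\x-\z'\|^2=D(\perp;(\x,c),\e)$, giving equality; (iii) if $f(\x)\le 0<f(\z')$, then $\Delta=\P_f(\z')$, and the goal reduces to showing the strict bound $\|\x-\P_f(\z')\|^2 < \|\x-\z'\|^2$.

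The main (and only nontrivial) step is case (iii). Because the benign region $\cC=\{\z : f(\z)\le 0\}$ is convex and closed, $\x\in\cC$, and $\z'\notin\cC$, the boundary projection $\P_f(\z')$ coincides with the $\ell_2$-projection of $\z'$ onto $\cC$ (the nearest point of a closed convex set to an external point lies on its boundary). I would then invoke the standard variational characterization of convex projection, $\langle \z'-\P_f(\z'),\,\x-\P_f(\z')\rangle\le 0$, and combine it with the Pythagorean expansion of $\|\x-\z'\|^2$ around $\P_f(\z')$ to obtain $\|\x-\z'\|^2 \ge \|\x-\P_f(\z')\|^2 + \|\P_f(\z')-\z'\|^2$, with the final term strictly positive since $\z'\notin\cC$. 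This closes case (iii) and, combined with (i)–(ii), establishes $D(f;(\x,c),\e)\le D(\perp;(\x,c),\e)$, with strict inequality arising precisely when $f(\x)\le 0<f(\z')$, as claimed.

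The main obstacle is the projection-contraction step in case (iii); it is a standard consequence of convex analysis, but I must also cleanly justify that the boundary projection $\P_f(\z')$ from \Cref{prop:best_response_z} agrees with the convex projection onto $\cC$ whenever $\z'\notin\cC$, and that the strict positivity of $\|\P_f(\z')-\z'\|^2$ indeed gives the strict-inequality characterization stated in the proposition.
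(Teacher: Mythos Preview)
The paper does not give a formal proof of this proposition; it only offers the one-line intuition immediately following it, that deploying $f$ ``requires adjusting $\x$ to stay closer to the boundary of $f$ compared to the response $\x+\frac{\e}{2c}$.'' Your plan is precisely the natural formalization of that remark---case-split via \Cref{prop:best_response_z} and then apply the obtuse-angle (Pythagorean) inequality for the metric projection onto a closed convex set in case~(iii)---and the argument for the non-strict inequality and for the ``if'' direction of the strict-inequality clause is correct.

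One loose end: your treatment of the ``only if'' direction silently assumes that case~(i) yields equality, but it does not. When $f(\x)>0$ you have $D(f;(\x,c),\e)=0$ while $D(\perp;(\x,c),\e)=\|\e\|^2/(4c^2)$, which is strictly positive whenever $\e\neq 0$; so strict inequality can occur even though the stated condition $f(\x)\le 0<f(\z')$ fails. This is really an imprecision in the proposition as literally written rather than a defect in your method: the paper evidently intends the comparison only on $\{\x:f(\x)\le 0\}$ (note the factor $\II[f(\x)\le 0]$ multiplying $D(\perp)$ in the very next definition of distortion mitigation), and under that restriction your case~(i) is vacuous and the ``iff'' goes through. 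You should either state that restriction explicitly or flag that the ``only if'' direction needs the extra hypothesis $f(\x)\le 0$, rather than conclude ``precisely when'' from a case analysis that leaves case~(i) unsettled.
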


Proposition \ref{prop:property_sd} illustrates a nontrivial moderator can always reduce a user's distortion in her expression, as it requires adjusting $\x$ to stay closer to the boundary of $f$ compared to the response $\x+\frac{\e}{2c}$ induced by a trivial moderator. Based on this, we propose a natural optimization objective that evaluates the expected social distortion mitigated by $f$ across a population of users:

\begin{definition}\label{def:social_distortion_save}
The social Distortion Mitigation (DM) induced by a moderator $f$ over a user $\u=(\x,c)$ is the difference between the average social distortion induced by a trivial moderator $\perp$ and $f$ on $\u$, i.e., 
\begin{equation}\label{eq:social_distortion_mitigation}
    h(f;\e,\x,c)=D(\perp;(\x,c),\e)\cdot \II[f(\x)\leq 0]-D(f;(\x,c),\e),
\end{equation}
and the total social distortion mitigation induced by $f$ on a population of users $U=\{u=(\x_i,c_i)\}_{i=1}^n$ is thus defined as
\begin{equation}\label{eq:total_social_distortion_mitigation}
    DM(f; U) = \sum_{\u\in U}h(f,u),
\end{equation}
where $\Delta(\x,c;f,\e)$ is defined in Eq. \eqref{eq:br_def}.
\end{definition}

Given a class of candidate moderator functions $\F = \{f\}$ and a user distribution $\cU$, the problem of optimizing expected social distortion over $\cU$ can be formulated as finding an $f \in \F$ that maximizes $DM(f;\cU)$. However, there is no guarantee on how much freedom of speech the optimal moderator $f$ will sacrifice—that is, how many users may need to be filtered out. The trivial moderator $f = \perp$, which does not filter out any users, cannot mitigate any social distortion. This suggests that any moderator aiming to reduce a reasonable amount of social distortion must inevitably sacrifice some degree of freedom of speech. 

To illustrate this trade-off, consider the toy model in Figure \ref{fig:toy_model}, where $\x$ is uniform distributed in a unit ball centered at the origin, and the social trend is $\e=(1,0)$. Clearly, any reasonable moderator $f$ that maximizes $DM$ would have a decision boundary perpendicular to $\e$, as this direction maximizes the deterrent effect of $f$ on users' strategic responses. For each moderator $f$ of the form $x=\theta,\theta\in[-1,1]$, we can plot the induced social distortion mitigation and a freedom of speech preservation index, which is the fraction of content still allowed on the platform, as shown in the right panel of Figure \ref{fig:toy_model}. 
As $f$ moves from the left margin of $\X$ to the right margin, the social distortion mitigation exhibits an inverted U-shape, while the freedom of speech index consistently increases. This 
illustrates the trade-off between the two measures. The tension arises because the maximum social distortion mitigation is intuitively achieved when $f$ is positioned where the content distribution is most concentrated, whereas freedom of speech preservation pushes the optimal $f$ toward the margins of the distribution, making it difficult to achieve a doubly optimal moderator.

\begin{figure}[!ht]
    \centering
    \includegraphics[width=0.35\textwidth]{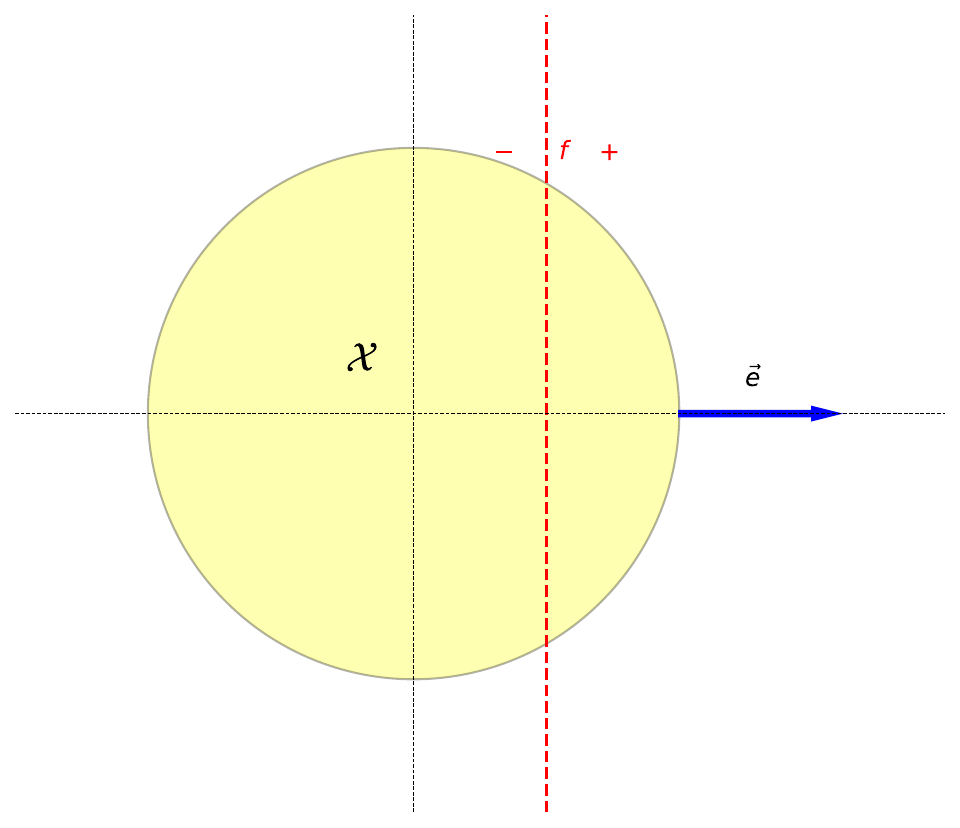}
    \includegraphics[width=0.6\textwidth]{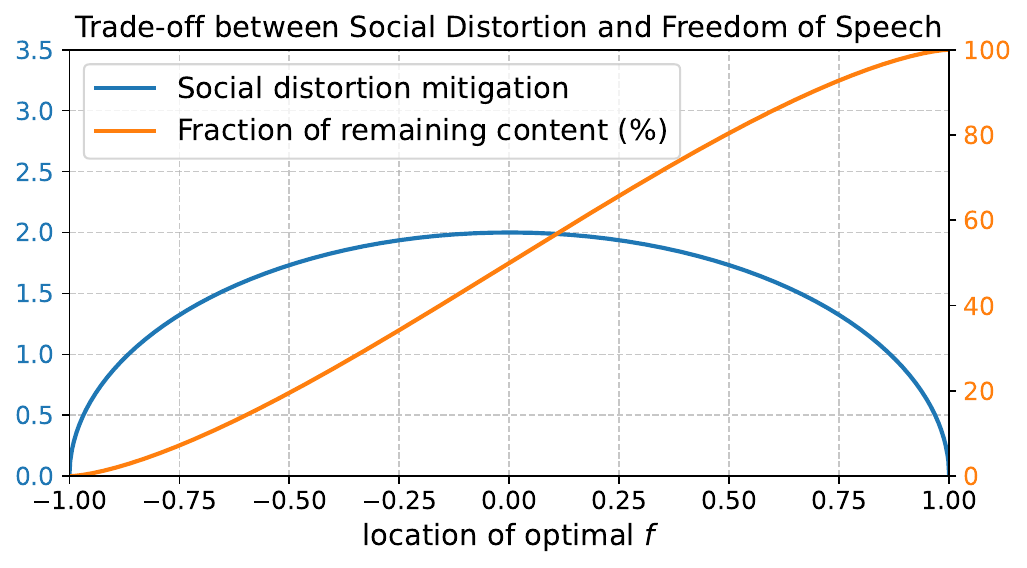}
    \caption{ An illustration of the trade-off between social distortion and freedom of speech in a toy model. Left: The original UGC distribution is uniformly random within a unit ball in $\RR^2$, with a social trend $\e=(1,0)$. Right: The optimal function $f$ under varying freedom of speech constraints and the resulting induced social distortion.}
    \label{fig:toy_model}
\end{figure}
\vspace{-3mm}
As we can learn from the toy example, the key challenge is determining how to strike a balance between the social distortion and freedom of speech objectives, for any possible distribution $\cU$. A straightforward approach to is to introduce a hard constraint to the social distortion minimization (or DM maximization) problem, ensuring that at most a certain fraction of users are filtered out had they manipulated their content as much as they wished. More specifically, if a user $\x$ would like to follow a social trend $\e$ and move to the location $\x+\frac{\e}{2c}$, but by doing so their content gets filtered out, then their freedom of speech is violated. This leads to the following formalized problem:


\begin{lp}\label{op:sd_minimization_general} 
    \find{ \arg\max_{f\in\F} \left\{\EE_{(\x,c)\sim\cU} [h(f; \x, c)]\right\}  } 
    \stt
    \con{ \EE_{(\x,c)\sim\cU}[\II[f(\x+\frac{\e}{2c})>0]]\leq \theta.}
\end{lp}

In reality, the platform usually only has access to an offline dataset $U=\{u=(\x_i,c_i)\}_{i=1}^n$ sampled from some distribution $\cU$. Therefore, a practical way to estimate the solution of OP \eqref{op:sd_minimization_general} is to solve the following empirical social distortion optimization problem. During training, we assume that we have access to un-manipulated examples. We can retrieve a set of clean examples 
by removing part of the content that violates the guidelines, e.g. misinformation. 


\begin{lp}\label{op:sd_minimization_empirical} 
    \find{ \arg\max_{f\in\F} \left\{\sum_{(\x_i,c_i)\in S} [h(f; \x, c)]\right\}  } 
    \stt
    \con{ \sum_{i=1}^n[\II[f(\x_i+\frac{\e}{2c})>0]]\leq K.}
\end{lp}

In the following discussion, we will first examine how well the empirical solution to OP \eqref{op:sd_minimization_empirical} approximates the solution to OP \eqref{op:sd_minimization_general} using tools from standard statistical learning theory. We will then explore the computational aspects of solving OP \eqref{op:sd_minimization_empirical}.

\section{Learnability of General Content Moderators}\label{sec:gen}

In this section we establish generalization guarantees for OP \eqref{op:sd_minimization_general}, that is, how many samples we need from the true distribution $\cD$
to solve OP \eqref{op:sd_minimization_empirical} in order to approximate the solution of OP \eqref{op:sd_minimization_general}. 
In~\Cref{thm:generalization_bound} we show sample complexity results in terms of the Vapnik–Chervonenkis dimension (VCDim) of the hypothesis moderator function class $\F$ and 
 \emph{Pseudo-Dimension} (PDim) of its corresponding distortion mitigation function class $\H_{\F}$.

\begin{theorem}\label{thm:generalization_bound}
    For any moderator function class $\F=\{f:\RR^d\rightarrow \{0,1\}\}$ and its induced DM function class $\H_{\F}=\{h(f)|f\in\F\}$ defined by Eq. \eqref{eq:social_distortion_mitigation}, and any distribution $\cU$ on $\X\times\C$, a training sample $U$ of size $\O\Big(\frac{1}{\eps^2}\Big(H^2(\PDim(\H_{\F})+\ln(1/\delta))+\VCDim(\F)\Big)\Big)$ is sufficient to ensure that with probability at least $1-\delta$, for every $f\in \F$, the distortion mitigation of $f$ on $U$ and $\cU$ and the fraction of filtered points on $U$ and $\cU$ each differ by at most $\eps$. 
\end{theorem}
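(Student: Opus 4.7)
The plan is to decompose the claim into two separate uniform convergence statements---one for the real-valued distortion mitigation functional $h(f;\x,c)$ and one for the binary filter-rate indicator $\II[f(\x+\frac{\e}{2c})>0]$---and then combine them via a union bound. The first calls for a pseudo-dimension based bound on the real-valued class $\H_{\F}$, while the second calls for a VC-dimension based bound on an associated indicator class.

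For the distortion mitigation part, I would first verify that each $h\in\H_{\F}$ takes values in a bounded interval $[0,H]$: indeed, $h(f;\x,c)$ vanishes when $f(\x)>0$, and otherwise equals $D(\perp;(\x,c),\e)-D(f;(\x,c),\e)$, which lies in $[0,D(\perp;(\x,c),\e)]$ thanks to \Cref{prop:property_sd}; the latter is uniformly bounded since $\X$ is compact. With this uniform range and the hypothesis on $\PDim(\H_{\F})$, the standard uniform convergence theorem for bounded real-valued function classes (Pollard/Haussler style) yields that $\O(H^2(\PDim(\H_{\F})+\ln(1/\delta))/\eps^2)$ i.i.d.\ samples suffice for uniform $\eps$-closeness between the empirical and expected distortion mitigation.

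For the filter-rate part, I would introduce the auxiliary binary class $\G=\{(\x,c)\mapsto\II[f(\x+\frac{\e}{2c})>0]:f\in\F\}$. Because $(\x,c)\mapsto\x+\frac{\e}{2c}$ is a deterministic map that does not depend on $f$, any set of user profiles shattered by $\G$ induces a corresponding set of transformed points shattered by $\F$; hence $\VCDim(\G)\leq\VCDim(\F)$. Applying the classical VC uniform convergence bound to $\G$ then shows that $\O((\VCDim(\F)+\ln(1/\delta))/\eps^2)$ samples suffice for uniform $\eps$-closeness of the empirical and expected filter fractions.

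Finally, a union bound over the two failure events (each assigned probability $\delta/2$), combined with taking the maximum of the two sample sizes, recovers the bound $\O\big(\eps^{-2}(H^2(\PDim(\H_{\F})+\ln(1/\delta))+\VCDim(\F))\big)$ stated in the theorem. The main technical point is establishing the uniform boundedness of $h\in\H_{\F}$ so that the pseudo-dimension machinery applies cleanly; the VC-based half is essentially routine once the observation that composing with a fixed, $f$-independent map cannot increase shattering capacity is in place.
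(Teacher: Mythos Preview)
Your proposal is correct and follows essentially the same approach as the paper: a pseudo-dimension uniform convergence bound for the real-valued class $\H_{\F}$, a VC-dimension uniform convergence bound for the filter-rate indicators, and a union bound to combine them. The paper phrases the filter-rate step as applying $\F$ directly to the pushforward distribution on $\x+\frac{\e}{2c}$ rather than bounding $\VCDim(\G)$, but this is merely cosmetic; your additional justification that $h$ is bounded in $[0,H]$ via \Cref{prop:property_sd} and compactness of $\X$ is in fact more careful than the paper's own proof, which simply asserts the range.
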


Intuitively, Pseudo-dimension is a generalization of VC-dimension to real-valued function classes, capturing the capacity of a hypothesis class to fit continuous outputs rather than binary labels. Similar to VC-dimension, which measures the complexity of a class in terms of shattering points in binary classification, pseudo-dimension evaluates the ability of a function class to fit arbitrary real values over a set of points. The formal definition of Pseudo-dimension can be found in Appendix \ref{app:pdim}. 

Theorem \ref{thm:generalization_bound} provides a general yet abstract characterization of the sample complexity required to approximate the solution to OP \eqref{op:sd_minimization_general}. More concretely, it suggests that problem \eqref{op:sd_minimization_general} is statistically learnable if we focus on moderator function classes $\F$ with a finite VC-dimension and ensure that the corresponding class $\H$ has a finite Pseudo-Dimension. Fortunately, many natural function classes $\F$ satisfy these conditions, as some explicit structure of $\F$ allows us to upper bound its $\PDim$ by merely leveraging its definition. These classes include linear functions, kernel-based linear functions, and piece-wise linear functions, as presented in the following Proposition \label{thm:pdim_l_sd}.

\begin{proposition}\label{prop:pdim_l_sd}
There exists function classes $\F$ such that $\VCDim(\F)$ and $\PDim(\H_{\F})$ are both bounded. For example:
\begin{enumerate}[leftmargin=15pt]
    \item When $\F=\{f(\x)=\II[\w^{\top}\x+b\leq 0]|(\w, b)\in \RR^{d+1}\}$ is the linear class, we have  
    \begin{equation}
        \VCDim(\F)\leq d+1, \quad \PDim(\mathcal{H_{\F}})\leq\tilde{\O}(d^2),
    \end{equation}
    where $\tilde{\O}$ is the big $O$ notation omitting the $\log$ terms.
    \item When $\F$ is a piece-wise linear function class with each instance constitutes $m$ linear functions, i.e., $\F=\{f(\x)=\II[\w_1^{\top}\x+b_1\leq 0]\vee\cdots\vee\II[\w_m^{\top}\x+b_m\leq 0]|(\w_i, b_i)\in \RR^{d+1},1\leq i\leq m\}$, we have 
    \begin{equation}
        \VCDim(\F)\leq \tilde{\O}(d \cdot3^m), \quad \PDim(\mathcal{H_{\F}})\leq \tilde{\O}(d^{m+1}\cdot3^{2^m}).
    \end{equation}
    
    \item When $\F$ is the linear class defined on some feature transformation mapping $\phi$, i.e., $\F=\{f(\x)=\II[\w^{\top}\phi(\x)+b\leq 0]|(\w, b)\in \RR^{d+1}\}$,
    as long as $\phi$ is invertible and order-preserving, it also holds that
    \begin{equation}
        \VCDim(\F)\leq d+1, \quad \PDim(\mathcal{H_{\F}})\leq\tilde{\O}(d^2).
    \end{equation}
\end{enumerate}

\end{proposition}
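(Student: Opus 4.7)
The plan is to establish the $\VCDim(\F)$ bounds via classical halfspace results together with a Sauer-type composition argument, and then obtain the $\PDim(\H_\F)$ bounds through polynomial sign-pattern counting applied to the explicit formula for $h(f;\x,c)$ derived from \Cref{prop:best_response_z} and \Cref{def:social_distortion_save}. The three parts of the proposition are handled in parallel, with case 3 reduced to case 1 via the feature map.

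\textbf{Step 1: VC-dimension bounds on $\F$.} For the linear class, $\VCDim(\F)\le d+1$ is the classical VC dimension of halfspaces in $\RR^d$. For the piecewise-linear class, each classifier is a disjunction of $m$ halfspaces parameterized by $m(d+1)$ reals, and any $n$ points realize at most $3^m$ distinct sign vectors under the $m$ hyperplanes; applying Sauer's lemma to the product class of halfspace indicators and composing with the fixed boolean function then yields $\VCDim(\F)\le \tilde{\O}(d\cdot 3^m)$. For the feature-transformation class, invertibility of $\phi$ implies that shattering $\{\x_i\}$ under $\F$ is equivalent to shattering $\{\phi(\x_i)\}$ under the standard linear class on $\RR^d$, so $\VCDim(\F)\le d+1$ carries over.

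\textbf{Step 2: Pseudo-dimension for the linear class.} The crux is to show that for each $(\x_i,c_i)$ and threshold $t_i$, the indicator $\II[h(f;\x_i,c_i)>t_i]$ is a boolean combination of $O(1)$ polynomial inequalities of bounded degree in the parameters $(\w,b)$. Setting $\z'_i\defeq \x_i+\e/(2c_i)$, \Cref{prop:best_response_z} gives $h(f;\x_i,c_i)=0$ unless $f(\x_i)\le 0<f(\z'_i)$, in which case $h=\|\e/(2c_i)\|^2-\|\x_i-\P_f(\z'_i)\|^2$. Substituting $\P_f(\z'_i)=\z'_i-\frac{\w^\top\z'_i+b}{\|\w\|^2}\w$ and clearing the positive denominator $\|\w\|^2$, the event $\II[h>t_i]$ reduces to at most three polynomial inequalities of degree $\le 2$ in the $d+1$ parameters. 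With $O(n)$ such polynomials across $n$ points and $p=d+1$ parameters, the Warren--Milnor--Thom sign-pattern bound, equivalently the Goldberg--Jerrum VC bound with arithmetic operation count $t=\tilde{\O}(d)$, gives at most $(O(n))^{\tilde{\O}(d)}$ distinct $n$-labelings. Requiring this $\ge 2^n$ for shattering yields $\PDim(\H_\F)\le \tilde{\O}(d^2)$.

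\textbf{Step 3: Piecewise-linear and feature-transformation extensions.} For the piecewise-linear case, the same framework applies but with two sources of combinatorial blow-up: (i) the benign region is bounded by up to $m$ hyperplanes, so the best response under \Cref{prop:best_response_z} can project onto any of the up to $2^m$ faces of this region, yielding up to $3^{2^m}$ per-point case configurations; (ii) the parameter count becomes $m(d+1)$. Re-running the sign-pattern argument with these inflated quantities (and tracking the induced polynomial degrees within each sub-case) yields $\PDim(\H_\F)\le \tilde{\O}(d^{m+1}\cdot 3^{2^m})$. For the feature-transformation class, order-preservation and invertibility of $\phi$ ensure that the case analysis from Step 2 applies verbatim to the lifted data $\{\phi(\x_i)\}$, producing $\PDim(\H_\F)\le \tilde{\O}(d^2)$.

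\textbf{Main obstacle.} The principal technical difficulty is the piecewise rational structure of $h$: the definition branches on the signs of $f(\x_i)$ and $f(\z'_i)$, and within the nontrivial branch $h$ is a rational function because of the projection. Clearing denominators using $\|\w\|^2>0$ without inflating polynomial degrees, and precisely accounting for both the number and degree of the resulting inequalities, is what controls the constants in the $\tilde{\O}$ bounds. This bookkeeping is mild in the linear case but becomes combinatorially heavy in the piecewise-linear case, where the projection step itself must be decomposed into sub-cases indexed by which facet of the benign region the best response lands on; this enumeration is the source of the $3^{2^m}$ factor and is the most delicate part of the proof.
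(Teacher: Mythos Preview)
Your approach is correct but methodologically different from the paper's. For the linear case, the paper does not invoke Warren/Goldberg--Jerrum; instead it factors $h=h_1\cdot h_2\cdot h_3$ where $h_2,h_3$ are the two halfspace indicators and $h_1(\x,c;\w,b)=-(\w^\top\x+b)^2+(\w^\top\e/2c)^2$ is the quadratic piece, and then proves a bespoke Sauer--Shelah composition lemma bounding $\PDim(\F\otimes\G)$ (and $\PDim(\F\ominus\F')$) by roughly $3(\PDim(\F)+\PDim(\G))\log(\PDim(\F)\PDim(\G))$. The $d^2$ enters because $h_1$ is linearized via a monomial feature map into $\RR^{(d+1)^2}$, giving $\PDim(\H_1)\le (d+1)^2+1$, and the lemma then stacks this with $\PDim(\H_2)\le d+2$ and $\PDim(\H_3)\le d+1$. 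For the piecewise-linear case the paper iterates the same product/min composition lemma across the $2^m$ projection sub-cases, which is where the $3^{2^m}$ factor arises. Your polynomial sign-pattern route is more modular and avoids the custom lemma entirely; in fact a clean application of Warren with $O(n)$ degree-$2$ polynomials in $d{+}1$ variables already yields $\PDim=\tilde\O(d)$, so your step from ``$(O(n))^{\tilde\O(d)}$ labelings'' to ``$\tilde\O(d^2)$'' slightly undersells the argument---it is the Goldberg--Jerrum version with $W=d{+}1$, $t=O(d)$ that lands exactly at $O(d^2)$. For Case~3, both you and the paper reduce to Case~1 by using invertibility and order-preservation of $\phi$ to transport the best-response projection into feature space; your treatment matches the paper's level of detail there.
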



Theorem \ref{thm:generalization_bound}, together with Proposition \ref{prop:pdim_l_sd}, demonstrates that finding the optimal linear moderator over an offline dataset for Eq. \eqref{op:sd_minimization_empirical} is statistically efficient for many natural and practical function classes, including those discussed in Proposition \ref{prop:pdim_l_sd}. The linear class is arguably one of the simplest and most effective tools for moderation, capable of representing linear scoring rules that aggregate UGC scores based on relevant features. When combined with feature transformation mappings, linear models can represent techniques like dimensionality reduction followed by linear scoring. Many transformation techniques, such as invertible autoencoders, satisfy the invertibility requirement, ensuring statistical learnability. Additionally, piecewise linear function classes correspond to scenarios where multiple scoring rules are applied simultaneously. However, for such classes, the VCDim and PDim grow exponentially with the number of linear functions. Nevertheless, if the number of functions $m$ remains small, sample-efficient learning is still achievable.

Proof of Theorem \ref{thm:generalization_bound} (\Cref{app:gen_proof}) first applies standard learning theory for real-valued functions~\citep{pollard1984convergence} to establish a generalization bound for OP \eqref{op:sd_minimization_empirical} without the freedom of speech constraint, relying on the PDim of $\H_{\F}$. Then, a union bound is used to account for the additional constraint, which depends on the VCDim of $\F$. The proof of Proposition \ref{prop:pdim_l_sd} involves a detailed analysis of the closed-form best response mapping for a user facing linear moderators. The core of the proof leverages the Sauer-Shelah Lemma \citep{sauer1972density,shelah1972combinatorial} to establish an upper bound on the PDim for a composition of two function classes. 
Next, we study the computational complexity of empirically identifying a moderator that optimizes social distortion subject to freedom of speech constraints.

\section{Computational Tractability of Optimal Linear Moderators}\label{sec:computation}

We discuss the computational complexity of OP \eqref{op:sd_minimization_empirical} in this section. To illustrate the idea, we focus on the class of linear moderator functions (i.e., $\F=\{f(\x)=\w^{\top}\x+b|\w\in\RR^{d},b\in\RR\}$) as it yields a closed-form objective function, which makes the problem more tractable. And in order to also derive a closed-form for the constraint, we use the true feature $\x$ to filter content
, but not the manipulated feature. Such an easier version of OP \eqref{op:sd_minimization_empirical} is formulated by the following Lemma \ref{lm:social_distortion_op}. And perhaps surprisingly, 
we show that 
this problem is NP-hard.


\begin{lemma}\label{lm:social_distortion_op}
 When $\F=\{f(\x)=\w^{\top}\x+b|\w\in\RR^{d},b\in\RR\}$ is the linear function class, OP \eqref{op:sd_minimization_empirical} is equivalent to the following constrained optimization problem:
\begin{lp}\label{op:sd_direction} 
    \find{ \arg\min_{\w,b} \left\{\sum_{\{i\in I\}} \left[(\w^{\top}\x_i+b)^2-\left(\frac{\w^{\top}\e}{2c_i}\right)^2\right]\right\}  } 
    \stt
    \con{  \sum_{i=1}^n \II\left[\w^{\top}\left(\x_i+\frac{\e}{2c_i}\right)+b\leq 0\right]\geq n-K,}
    \con{  -1\leq w_j \leq 1, 1\leq j\leq d.}
\end{lp}
where the index set $I=\{i\in[n]: -\frac{\w^{\top}\e}{2c_i}<\w^{\top}\x_i+b\leq 0\}$. 
\end{lemma}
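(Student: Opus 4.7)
The plan is to compute the distortion mitigation $h(f;\e,\x_i,c_i)$ explicitly for a linear moderator $f(\x)=\w^\top\x+b$ by case-splitting according to Proposition~\ref{prop:best_response_z}, simplify the expression via the orthogonal projection formula, and then use scale-invariance to justify reading the box constraint as a normalization. The freedom-of-speech constraint translates directly.

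\textbf{Case analysis and projection.} Letting $\z'_i \defeq \x_i + \e/(2c_i)$, I would first use Proposition~\ref{prop:best_response_z} together with \eqref{eq:social_distortion_mitigation} to rule out two of its three cases: (a) if $f(\x_i) > 0$, both the indicator $\II[f(\x_i)\leq 0]$ and $D(f;(\x_i,c_i),\e)$ vanish, so $h_i = 0$; (b) if $f(\x_i)\leq 0$ and $f(\z'_i)\leq 0$, then $\z^*_i = \z'_i$ gives $D(f;(\x_i,c_i),\e) = \|\e/(2c_i)\|^2 = D(\perp;(\x_i,c_i),\e)$ and again $h_i = 0$. Only the case $f(\x_i) \leq 0 < f(\z'_i)$ contributes, and this is precisely $-\w^\top\e/(2c_i) < \w^\top\x_i + b \leq 0$, i.e., the index set $I$. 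For each such $i$ I would plug in $\z^*_i = \P_f(\z'_i) = \z'_i - (f(\z'_i)/\|\w\|^2)\w$, so
\begin{equation*}
\x_i - \P_f(\z'_i) = -\e/(2c_i) + (f(\z'_i)/\|\w\|^2)\w .
\end{equation*}
Squaring, substituting $f(\z'_i) = (\w^\top\x_i + b) + \w^\top\e/(2c_i)$, and invoking the identity $(a+e)(e-a) = e^2 - a^2$ with $a = \w^\top\x_i + b$, $e = \w^\top\e/(2c_i)$ would yield the clean form
\begin{equation*}
h_i = \frac{1}{\|\w\|^2}\Big[\big(\tfrac{\w^\top\e}{2c_i}\big)^2 - (\w^\top\x_i + b)^2\Big].
\end{equation*}

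\textbf{Normalization and constraint translation.} Both $I$ and every $h_i$ are invariant under the positive rescaling $(\w,b)\mapsto(\lambda\w,\lambda b)$, so the DM objective is scale-degenerate; I would argue that the box constraint $|w_j|\leq 1$ resolves this degeneracy and that, along any optimal ray, the common scalar $1/\|\w\|^2$ can be absorbed, so that maximizing $\sum_{i\in I}h_i$ becomes equivalent to minimizing $\sum_{i\in I}[(\w^\top\x_i+b)^2 - (\w^\top\e/(2c_i))^2]$. The freedom-of-speech constraint $\sum_i \II[f(\z'_i) > 0] \leq K$ is then just the complement of $\sum_i \II[\w^\top(\x_i + \e/(2c_i)) + b \leq 0] \geq n - K$, exactly as in OP~\eqref{op:sd_direction}.

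\textbf{Main obstacle.} The only non-routine step is the algebraic collapse above: unpacking $\|\x_i - \P_f(\z'_i)\|^2$ and recognizing the difference-of-squares structure requires care. A more conceptual subtlety is the normalization argument: because the true DM objective carries a $1/\|\w\|^2$ factor absent from OP~\eqref{op:sd_direction}, the equivalence must be justified via positive $0$-homogeneity of DM in $(\w,b)$ so that optimizing either problem under the box constraint recovers the same decision hyperplane.
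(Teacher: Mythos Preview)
Your proposal is correct and follows essentially the same route as the paper's proof: case-split via Proposition~\ref{prop:best_response_z} to isolate the index set $I$, compute the projection explicitly to obtain $h_i=\frac{1}{\|\w\|^2}\big[(\w^\top\e/(2c_i))^2-(\w^\top\x_i+b)^2\big]$, invoke scale-invariance to pass from the $1/\|\w\|^2$-weighted objective to the unweighted one under a normalization, and translate the constraint by complementation. The only cosmetic difference is that the paper first fixes $\|\w\|_2=1$ (so the $1/\|\w\|^2$ factor literally disappears) and then argues that the $\ell_2$ and $\ell_\infty$ normalizations pick out the same optimal hyperplane, whereas you go straight to the box constraint; your ``along any optimal ray the scalar can be absorbed'' is exactly the content of that step, and you are right to flag it as the one conceptually delicate point.
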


\subsection{Computational hardness for exact solutions}

Since OP \eqref{op:sd_direction} involves the strict constraint $-\frac{\w^{\top}\e}{2c_i}<\w^{\top}\x_i+b$, we follow a standard practice by introducing a slack variable $\epsilon>0$ and consider a relaxed problem, replacing the strict constraint with a non-strict one: $\epsilon-\frac{\w^{\top}\e}{2c_i}\leq\w^{\top}\x_i+b$. A natural question that follows is whether we can efficiently solve this relaxed version of OP \eqref{op:sd_direction}. However, despite the nice quadratic form of the objective function in \eqref{op:sd_direction}, the combinatorial nature of the constraint and the indefiniteness of the quadratic objective make the problem challenging to solve. In fact, in Theorem \ref{thm:NP_hardness} we show that any $\epsilon$-relaxation of OP \eqref{op:sd_direction} is NP-hard. 

\begin{theorem}\label{thm:NP_hardness}
For any given input $\epsilon>0, n,K\in\mathbb{N}_{+}$ and offline dataset $\X=\{(\x_i,c_i)\}_{i=1}^n$, finding the optimal solution to the $\epsilon$-relaxation of OP \eqref{op:sd_direction} in the following form is NP-hard with respect to $(n,K,1/\epsilon)$:
\begin{lp}\label{op:sd_eps} 
    \find{ \arg\min_{\w,b} \left\{\sum_{i:\epsilon-\frac{\w^{\top}\e}{2c_i}\leq\w^{\top}\x_i+b\leq 0} \left[(\w^{\top}\x_i+b)^2-\left(\frac{\w^{\top}\e}{2c_i}\right)^2\right]\right\}  } 
    \stt
    \con{  \sum_{i=1}^n \II\left[\w^{\top}\left(\x_i+\frac{\e}{2c_i}\right)+b\leq 0\right]\geq n-K,}
    \con{  -1\leq w_j \leq 1, 1\leq j\leq d.}
\end{lp}
\end{theorem}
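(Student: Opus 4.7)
The plan is to prove NP-hardness by a polynomial-time Karp reduction from a classical NP-complete combinatorial problem. Given that $1/\epsilon$ appears explicitly in the hardness parameters, a natural first attempt is a reduction from \textbf{Subset Sum} or \textbf{Partition}, with $1/\epsilon$ carrying the numerical precision of the arithmetic instance. Viable alternatives include \textbf{3-SAT} or a Max-$k$-Densest-Open-Halfspace variant, which more directly match the cardinality constraint in OP \eqref{op:sd_eps}; I would keep these in reserve if the Subset Sum encoding becomes geometrically awkward.

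Before constructing the reduction, I would pinpoint the three interacting combinatorial features of OP \eqref{op:sd_eps}: (i) the summation index set $I(\w,b)=\{i:\epsilon-\w^\top\e/(2c_i)\leq \w^\top\x_i+b\leq 0\}$ depends discontinuously on $(\w,b)$; (ii) the freedom-of-speech constraint $\sum_i \II[\w^\top(\x_i+\e/(2c_i))+b\leq 0]\geq n-K$ is a cardinality constraint over $n$ linear half-spaces; (iii) the unconstrained objective is an indefinite quadratic in $(\w,b)$. Any one of these alone can already produce NP-hardness (indefinite QP, densest open halfspace, etc.), so the reduction has multiple levers to pull.

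For a Subset Sum reduction, given integers $a_1,\ldots,a_m$ and target $t$, I would construct $n=m+O(1)$ users $(\x_i,c_i)$ in $\RR^d$ with $d$ a small constant. Setting $\e=e_1$ along a coordinate axis and taking all $c_i$ equal simplifies the strip widths $\epsilon-\w^\top\e/(2c_i)$ to depend only on $w_1$. Placing the $\x_i$ on a line parallel to $e_1$ at positions correlated with $a_i$ reduces the geometry to a one-dimensional threshold problem. Gadget points (anchors far out along other coordinates) would force $\w$ to collapse to $e_1$ at any optimum, leaving $b$ as the effective threshold. The cardinality constraint with an appropriately chosen $K$ enforces that the selected subset has the target cardinality, and the indefinite quadratic then evaluates to a function of $\sum_{i\in S}a_i$ whose unique minimum over admissible subsets is attained exactly when $\sum_{i\in S}a_i=t$.

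The main obstacle I foresee is orchestrating the three mechanisms so the reduction is simultaneously sound, complete, and decodable from approximate optima. In particular, the $\epsilon$-relaxation introduces near-boundary cases in which a point may sit at the edge of the strip; a perturbation argument (shifting the $\x_i$ by $\Theta(\epsilon)$) is needed to ensure robust membership in $I(\w,b)$, and the ``reward'' term $(\w^\top\e/(2c_i))^2$ must aggregate predictably over the chosen subset rather than leaking information through its dependence on $\w$. I would then verify that the construction has bit-size polynomial in $(n,K,1/\epsilon)$---in particular, that the required $\epsilon$ can be taken at least inverse-polynomial in the Subset Sum magnitudes---so that NP-hardness propagates with respect to all three parameters as claimed.
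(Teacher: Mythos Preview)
Your plan differs substantially from the paper's approach, and it has a structural gap that you have not addressed.

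The paper does \emph{not} use a numerical problem like Subset Sum. Instead it reduces from \textsc{Exact 3-Set Cover} via an intermediate problem (``$\epsilon$-MAX-FLS with mandatory constraints''). The key trick is to set $\e=(0,\ldots,0,1)\in\RR^{d+1}$ and all $c_i=\tfrac{1}{2\epsilon}$, so that the strip in the index set becomes $\{i:\epsilon(1-\w^\top\e)\leq \w^\top\x_i+b\leq 0\}$. One then argues that any optimal $\w$ must have $w_{d+1}=1$ (otherwise the strip is empty and the objective is zero), which collapses the strip to the hyperplane $\{\w^\top\x_i+b=0\}$ and makes every point on it contribute exactly $\epsilon^2$ to the objective. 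The problem thus becomes purely combinatorial: maximize the number of points lying \emph{on} a hyperplane subject to a cardinality constraint on how many points lie strictly below it. That combinatorial problem is then shown NP-hard by a direct Exact-3-Cover gadget. No numerical precision is carried in $\epsilon$; it merely scales the reward.

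Your Subset Sum route, by contrast, keeps the quadratic live. With $\w=e_1$, equal $c_i$, and $x_{i,1}=a_i$, the objective over a chosen index set $S$ is $\sum_{i\in S}[(a_i+b)^2-C^2]$, which depends on $\sum_{i\in S}a_i^2$ as well as $\sum_{i\in S}a_i$. Minimizing freely over $b$ yields a variance-type expression, not a subset-sum value; pinning $b$ with gadgets still leaves the $\sum a_i^2$ term uncontrolled across subsets. Your proposal asserts ``the indefinite quadratic then evaluates to a function of $\sum_{i\in S}a_i$'' without saying how the second-moment dependence is neutralized---this is the missing idea. It may be salvageable (e.g., by embedding each $a_i$ in a pair of points whose squared contributions cancel), but as written the encoding does not go through, and the paper's collapse-the-strip-to-a-hyperplane device sidesteps the issue entirely.
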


Theorem \ref{thm:NP_hardness} demonstrates that minimizing social distortion under a hard constraint---limiting the number of users whose content can be filtered---is computationally intractable. This complexity arises because finding a linear moderator $f$ that minimizes social distortion is analogous to finding a hyperplane that maximizes the number of points near its boundary, as the amount of social distortion for each content $\x$ only increases as $\x$ approaches the boundary of $f$. With the additional constraint, the problem becomes a combinatorial geometric challenge: given a set of $n$ points, find a hyperplane that maximizes the number of points lying on it while ensuring that at least $K$ points remain on each side of the hyperplane. This turns out to be hard. The formal proof, provided in Appendix \ref{app:thm_hardness}, contains two core reductions. First, we reduce the original OP \eqref{op:sd_eps} from a combinatorial optimization problem called Maximum Feasible Linear Subsystems (MAX-FLS) with mandatory constraints, and then we show that the problem of MAX-FLS with mandatory constraints is NP-hard by showing a reduction from the Exact 3-Set Cover problem, which is known to be NP-hard.

\subsection{Heuristic method for approximated solutions}\label{sec:experiment} 

Since 
minimizing the social distortion 
with a hard freedom of speech constraint is NP-hard even for linear function class, we resort to an approximation approach for solving this problem. Still focusing on linear moderators, a straightforward method for tackling the hard constraint is using penalty functions \citep{han1979exact,nocedal1999numerical}. That is, for any $(\x_i,c_i)$ that violates the moderator, we introduce a penalty function $P_i(\w,b)$ in the objective, as formulated in the following OP:

\begin{align}\label{lp:social_distortion_op_soft} 
    & \arg\min_{\w,b} \left\{\sum_{\{i:-\frac{\w^{\top}\e}{2c_i}\leq\w^{\top}\x_i+b\leq 0\}} \left[(\w^{\top}\x_i+b)^2-\left(\frac{\w^{\top}\e}{2c_i}\right)^2\right] +  \lambda P(\w,b)\right\}  \\ \notag
    & \text{s.t. } -1\leq w_j \leq 1, 1\leq j\leq d.
\end{align}


Let $g(\w, b; U, \e) = \left[\w^{\top}\left(\x_i + \frac{\e}{2c_i}\right) + b > 0\right]_+,$ where the notation $[y_i]_+$ denotes the vector $[\max(0, y_i)]_{i=1}^n$. 
The hard constraint in OP~\eqref{op:sd_eps} can thus be expressed as $\|g(\w, b; U, \e)\|_0 \leq K,$ where $\|\cdot\|_0$ represents the $\ell_0$-norm. A natural choice for the penalty in OP~\eqref{lp:social_distortion_op_soft} is then $P(\w, b) = \|g(\w, b; U, \e)\|_0.$ However, in machine learning and optimization, it is common practice to replace the $\ell_0$ penalty with $\ell_1$- or $\ell_2$-based penalties, often referred to as convex relaxations, to improve numerical stability. We will use the $\ell_2$-based penalty in our experiments. 

\noindent
{\bf Finding the optimal penalty strength $\lambda$:} To balance the social distortion objective and the freedom of speech penalty term, selecting an appropriate parameter $\lambda > 0$ is crucial. A value of $\lambda$ that is too small may fail to yield a solution satisfying the constraint, whereas a sufficiently large $\lambda$ can enforce the constraint but may lead to an excessively large objective value (i.e., a too small DM). Intuitively, there exists an optimal choice of $\lambda$ that strikes this balance. The following proposition demonstrates the existence of such a $\lambda$ and outlines how the platform can determine it.

\begin{proposition}\label{prop:monotone_lambda}
Let $(\w_1, b_1)$, $(\w_2, b_2)$ be any solution of OP \eqref{lp:social_distortion_op_soft} by choosing $\lambda=\lambda_1$ and $\lambda=\lambda_2$ ($\lambda_1 > \lambda_2 \geq 0$). Then we have $P(\w_1,b_1)\leq P(\w_2,b_2)$, $DM(\w_1,b_1)\leq DM(\w_2,b_2)$.
\end{proposition}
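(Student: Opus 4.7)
The plan is to apply the standard ``two-point optimality swap'' used whenever one studies how a penalty-parameterized minimizer depends on the penalty. Write the objective of OP \eqref{lp:social_distortion_op_soft} as
$J(\w, b; \lambda) := F(\w, b) + \lambda P(\w, b),$
where $F(\w,b)$ denotes the first (social-distortion) sum of OP \eqref{lp:social_distortion_op_soft}. From the projection identity behind Lemma \ref{lm:social_distortion_op}, after the scale-normalization $\|\w\|=1$ (without loss of generality, since $(\w,b)\mapsto(\alpha\w,\alpha b)$ with $\alpha>0$ defines the same moderator), one has $DM(\w,b) = -F(\w,b)$. It therefore suffices to prove $P(\w_1,b_1)\le P(\w_2,b_2)$ and $F(\w_1,b_1)\ge F(\w_2,b_2)$.

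First I would write the two optimality inequalities: since $(\w_i,b_i)$ minimizes $J(\,\cdot\,;\lambda_i)$,
\begin{align*}
F(\w_1,b_1) + \lambda_1 P(\w_1,b_1) &\le F(\w_2,b_2) + \lambda_1 P(\w_2,b_2),\\
F(\w_2,b_2) + \lambda_2 P(\w_2,b_2) &\le F(\w_1,b_1) + \lambda_2 P(\w_1,b_1).
\end{align*}
Adding them, the $F$-terms cancel and I obtain $(\lambda_1-\lambda_2)\bigl[P(\w_1,b_1) - P(\w_2,b_2)\bigr] \le 0$; since $\lambda_1 > \lambda_2$, this gives the first claim $P(\w_1,b_1) \le P(\w_2,b_2)$.

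For the second claim, rearrange the second optimality inequality as
$F(\w_2,b_2) - F(\w_1,b_1) \le \lambda_2\bigl[P(\w_1,b_1) - P(\w_2,b_2)\bigr].$
The right-hand side is $\le 0$ because $\lambda_2\ge 0$ and by the first claim. Hence $F(\w_1,b_1) \ge F(\w_2,b_2)$, which under the identification $DM = -F$ reads $DM(\w_1,b_1) \le DM(\w_2,b_2)$.

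The entire argument uses only optimality of each $(\w_i,b_i)$ for its own $\lambda_i$ --- no convexity, smoothness, or uniqueness of minimizers is needed, so it applies equally to the $\ell_0$, $\ell_1$, and $\ell_2$ penalty choices discussed in the paper. The only subtle step, and the single place that requires care, is the identification $DM = -F$; this should be addressed up front by invoking the scale normalization $\|\w\|=1$ (or, equivalently, by noting that the closed-form projection calculation from the proof of Lemma \ref{lm:social_distortion_op} transfers any monotonicity of $F$ at the optimum directly to $DM$).
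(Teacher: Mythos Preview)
Your proof is correct and follows essentially the same ``two-point optimality swap'' as the paper's proof, which also writes the objective as $-DM(\w,b)+\lambda P(\w,b)$ and compares the two optimality inequalities. Your derivation of the second claim is in fact slightly cleaner: by feeding the first claim directly into the $\lambda_2$-optimality inequality you avoid the paper's case split on $\lambda_2=0$ versus $\lambda_2>0$ (the paper multiplies by $\lambda_1/\lambda_2$, which forces that split).
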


Proposition \ref{prop:monotone_lambda} enables the platform to employ a binary search-based approach to determine a proper $\lambda$ for any given constraint $\|g(\w, b)\|_0 \leq K$. First, the platform can start with $\lambda_{\max} = \sup_{(\w, b)}\{DM(\w, b)\} \sim O(n^3)$, as any $\lambda > \lambda_{\max}$ would enforce the equivalent constraint $\|g(\w, b)\|_0 = 0$. Then, the platform performs a binary search in interval $[\lambda_{\text{left}}, \lambda_{\text{right}}] = [0, \lambda_{\max}]$. At each iteration, it solves OP~\eqref{lp:social_distortion_op_soft} with $\lambda = \frac{\lambda_{\text{left}} + \lambda_{\text{right}}}{2}$ and obtains $(\w_*, b_*)$. According to Proposition \ref{prop:monotone_lambda}, if $P(\w_*, b_*) > K$, the platform excludes the lower half of the range $[\lambda_{\text{left}}, \lambda_{\text{right}}]$ in the next round; otherwise, it searches in the upper half. This procedure allows the platform to identify the optimal $\lambda$ within precision $\delta$ by solving at most $O(3\log_2(n) + \log_2(1/\delta))$ instances of OP~\eqref{lp:social_distortion_op_soft}. The proof of Proposition \ref{prop:monotone_lambda} is straightforward and is provided in Appendix \ref{app:prop_lambda}.

\noindent
{\bf Efficient heuristic with smooth surrogate loss:}
Next we discuss for a particular chosen $\lambda$, how to efficiently obtain an solution of OP \eqref{lp:social_distortion_op_soft}. Let $a_i=\frac{\w^{\top}\e}{2c_i}, y_i=\w^{\top}\x_i+b+a_i$, we can re-formulate OP \eqref{lp:social_distortion_op_soft} as the following cleaner form
\begin{lp}\label{op:sd_y} 
    \find{   \arg\min_{\w,b} \left\{ \sum_{1\leq i\leq n} l(\w, b;\x_i, c_i,\e)\right\}  } 
    \stt
    \con{l(\w, b;\x_i, c_i,\e)=\max\left\{  0, y_i \right\}\cdot \left(y_i-2a_i\right)+\lambda P_i(\w,b)}
    \con{ -1\leq w_j\leq 1, 1\leq j\leq d.}
\end{lp}

The objective function in OP \eqref{op:sd_y} can be understood as the aggregation of the social good loss $l$ induced by each user $i$, consisting of two components. The first part, $\max\left\{  0, y_i \right\}\cdot \left(y_i-2a_i\right)$, measures the social distortion incurred by the linear moderator $(\w, b)$, and the second part reflects the calibrated infringement on freedom of speech: the larger penalty term $P_i$ is, the farther user $i$'s content is from the decision boundary on the positive side of $f$, making it more likely that user $i$'s content $\x$ will be filtered. 

The structure of OP \eqref{op:sd_y} resembles the empirical loss minimization problem commonly seen in standard machine learning problems, and we can employ a stochastic gradient descent approach to tackle it, given any specific penalty functions and trade-off parameter $\lambda$. To ensure the social good loss $l$ is differentiable so that we can apply gradient-based approach, we need to further introduce a surrogate loss $\tilde{l}$ to smooth the non-differentiable point at $y_i=0$ of $\max\left\{  0, y_i \right\}\cdot \left(y_i-2a_i\right)$ while selecting a differentiable penalty function. The detail of this treatment is similar to \citep{levanon2021strategic} and is outlined in the optimization solver setup in the next section. In the following experiments, we apply this approach to solve \eqref{op:sd_y} using a synthetic dataset and report the approximate optimal linear moderator for different trade-off parameters $\lambda$.

\subsection{Experiments}

\noindent
{\bf Synthetic data generation:} We generate synthetic dataset from mixed Gaussian distribution in $\RR^d$ to mimic the distribution of $\x$. Specifically, we first sample $k$ centers $\c_i$ from $\N(0, I_{d})$ and then for each $\c_i$ generate $m=n/k$ samples from $\N(\c_i, \sigma_i^2 I_{d})$, where $\sigma_i$ is sampled uniformly at random from $[0.3, 0.5]$. Without loss of generality we set $\e$ as the unit vector $(1,0,\cdots,0)$, and sample $c_i$ independently from uniform distribution $\mathcal{U}[0.5, 1.5]$. In the experiments we choose $d=5, n=500, k=5$, and additional result under different data scales can be found in Appendix \ref{app:exp}.

\noindent
{\bf Optimization solver setup:} we solve OP \eqref{op:sd_y} by setting the $\ell_2$ penalty $P(\w,b)=\left\|g(\w,b;U,\e)\right\|_2$, i.e., $P_i(\w,b)=\II[y_i>0]\cdot y_i^2$, where $a_i=\frac{\w^{\top}\e}{2c_i}, y_i=\w^{\top}\x_i+b+a_i$ as defined in the constraints of OP \eqref{op:sd_y}. The reason we choose such a form is because the resultant social good loss function $l$ can preserve continuity and first-order differentiable property, paving the way for gradient-based method. Additional details about the optimizer setup can be found in Appendix \ref{app:exp_detail}.

\noindent
{\bf Result:} A 2-dimensional visualization in Figure \ref{fig:experiment} illustrates the optimal linear moderators for both a small $\lambda$ ($\lambda=0.1$) and a larger value ($\lambda=10.0$). Each user's original content feature $\x$ is represented by a blue dot, while its strategic response to the moderator is shown in red. The social trend is $\e=(1,0)$. As the figure shows, a larger $\lambda$ shifts the moderator boundary toward the margin of the content distribution, as desired. This results in fewer pieces of content being filtered, while still achieving a reasonable degree of social distortion mitigation. When $\lambda$ is small, the computed optimal moderator minimizes social distortion but at the expense of infringing on more users' freedom of speech. The right panel displays both the social distortion mitigation (i.e., the negative of the optimal objective value of OP \eqref{op:sd_y}) and a freedom of speech preservation index, measured by the fraction of content that remains on the platform under the regulation of the computed moderators with varying $\lambda$. As shown, the freedom of speech index increases as $\lambda$ grows, while social distortion mitigation follows an inverted U-shape. This suggests a trade-off between these two objectives, similar to the one observed in the toy model \ref{fig:toy_model}. Our result indicates our proposed optimization technique can effectively approximate a solution, thus allowing the platform to flexibly balance social distortion and freedom of speech despite of the computational challenge.

\begin{figure}[!ht]
    \centering
    \includegraphics[width=0.4\textwidth]{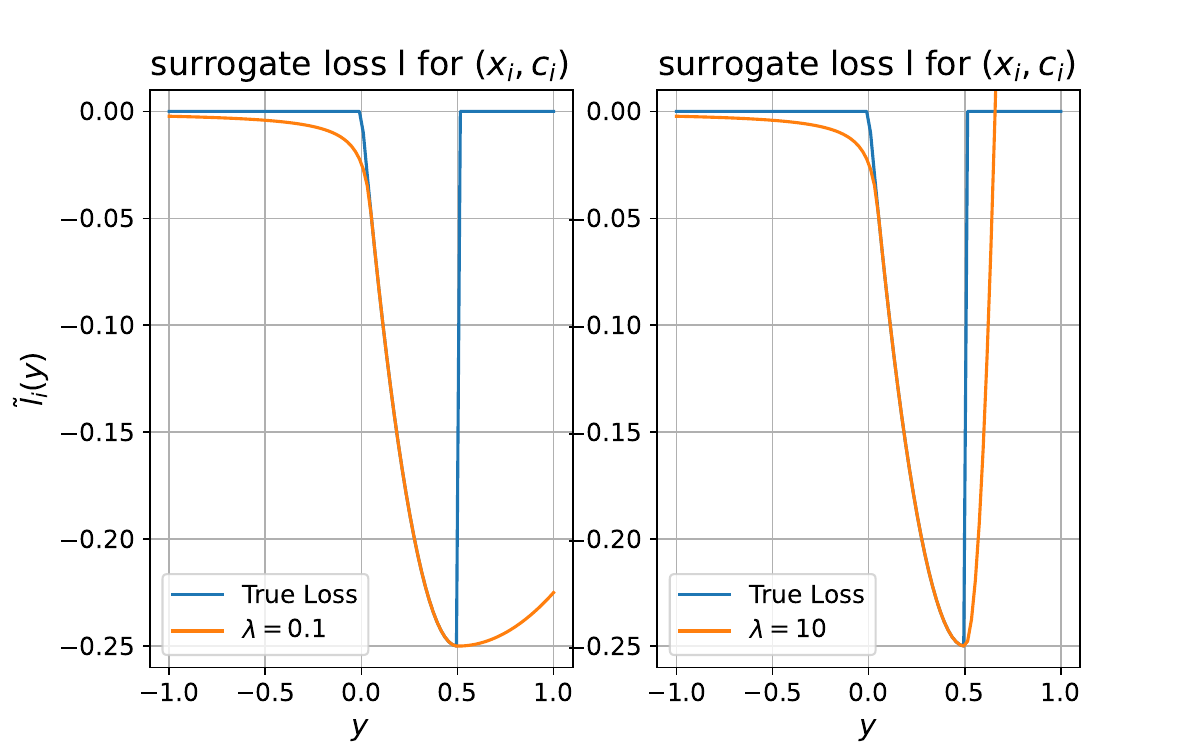}
    \includegraphics[width=0.165\textwidth]{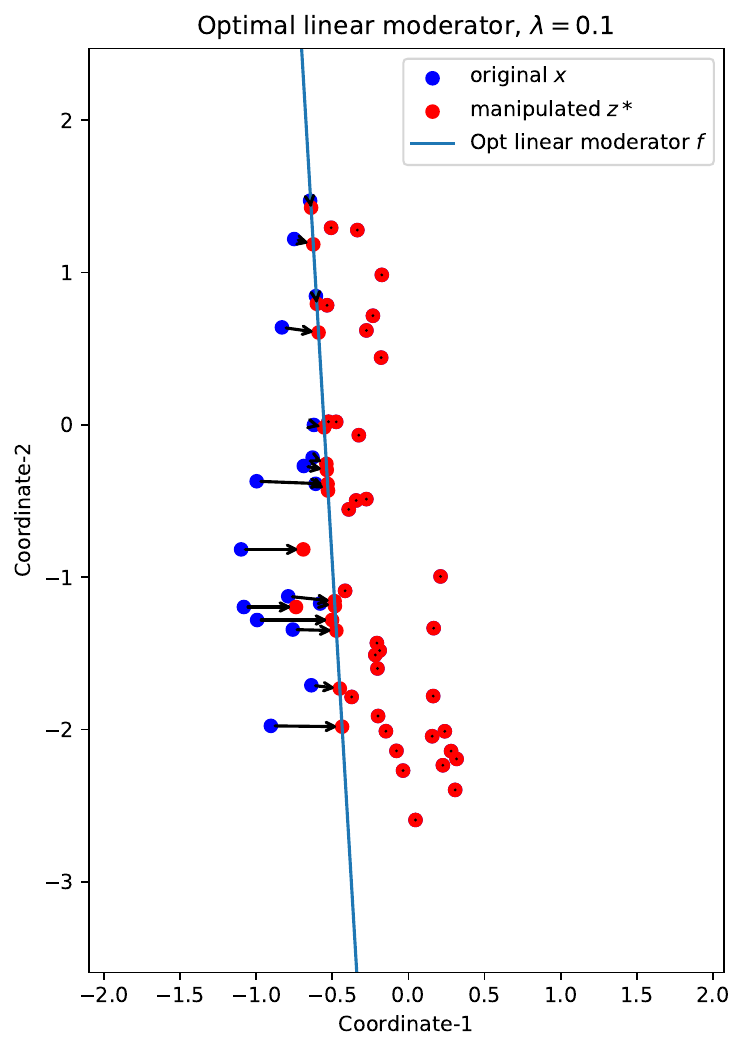}
    \includegraphics[width=0.165\textwidth]{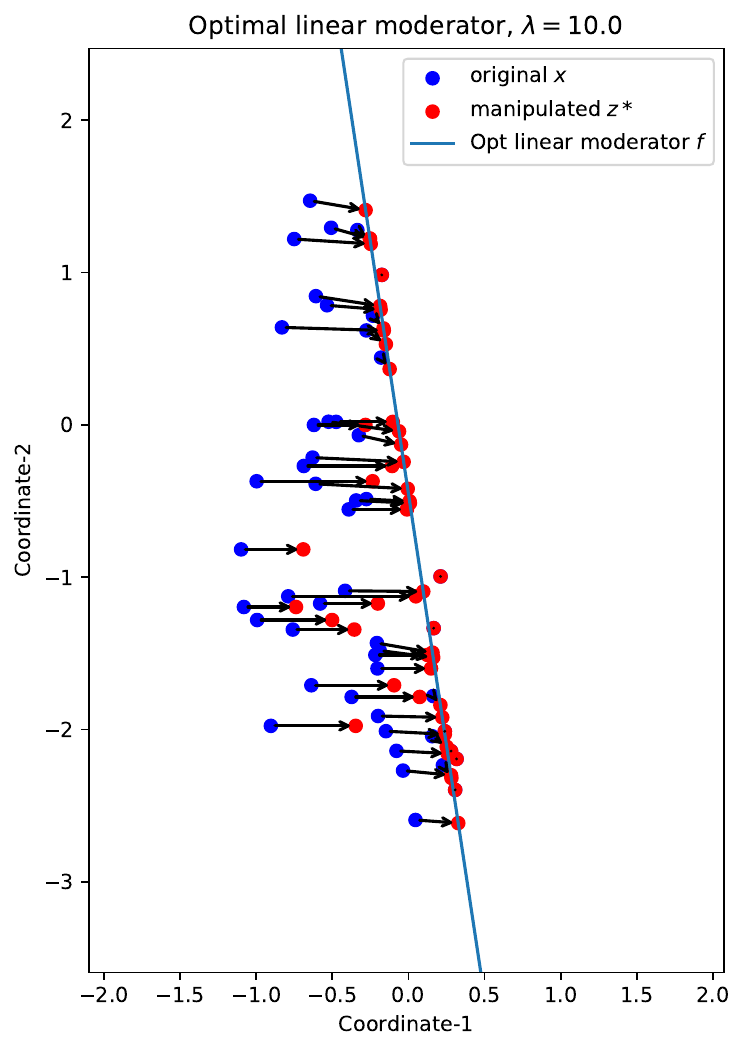}
    \includegraphics[width=0.23\textwidth]{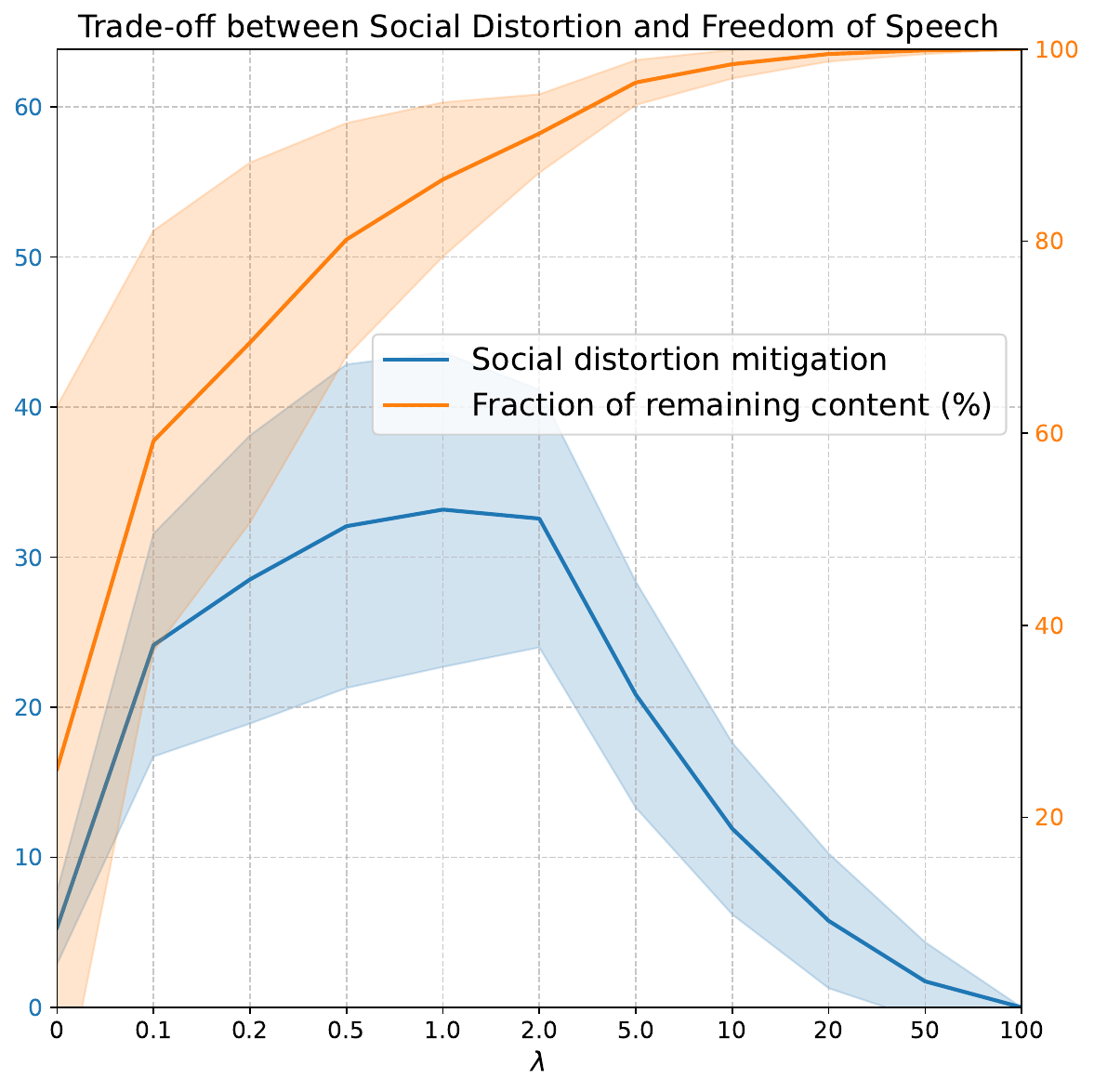}
    \caption{Left: the constructed quasi-convex single point surrogate loss function. Middle: the compupted moderator obtained under $\lambda=0.1$ and $10.0$. Arrows represent users' strategic manipulations against the optimal linear moderator. Right: social distortion mitigation (blue) and the fraction of remaining content on the platform (yellow) incurred by the computed moderator obtained under different $\lambda\in[0.1,100]$. Error bars obtained from results with 20 independently generated dataset. Error bars are $1\sigma$ region based on results from 20 independently generated datasets.}
    \label{fig:experiment}
\end{figure}


\section{Conclusion}

We addressed the challenge of designing content moderators that reduce engagement with harmful social trends while preserving freedom of speech. By modeling the problem as a constrained optimization, we introduced the concept of social distortion and provided generalization guarantees based on the VC-dimension and Pseudo-dimension of the filter function class. While we established the computational hardness of finding optimal linear filters, we provide an empirically efficient approximation approach that enables the platform to achieve any desirable trade-offs. Our findings highlight the need for efficient algorithms and further exploration of more flexible filtering mechanisms.

\bibliography{main}
\bibliographystyle{plainnat}

\newpage

\appendix

\begin{center}
    \textbf{\Large Appendix to \textit{Strategic Filtering for Content Moderation: Free Speech or Free of Distortion?}}
\end{center}

\section{Proof of Proposition \ref{prop:best_response_z}}\label{app:proof_prop_1}

\begin{proof}
Let $\D=\{\z:f(\z;\w)\leq 0\}$. According to the definition of convex moderator, $\D$ is a convex set in $\RR^d$. 

If $\x \in \D$, under the formulation of Eq. \eqref{eq:utility_general}, each user's best response is the solution to the following convex optimization problem (OP)

\begin{lp}\label{op:direct_br} 
\find{   \z^*=\arg\min_{\z} \{-\z^{\top}\e+c\|\z-\x\|_2^2\}  } 
\stt
\con{  \z\in\D.}
\end{lp}

Observe that the objective function $$-\z^{\top}\e+c\|\z-\x\|_2^2=c\left\|\z-\left(\x+\frac{\e}{2c}\right)\right\|_2^2-\frac{\e^{\top}\e}{4c},$$
OP \eqref{op:direct_br} is thus equivalent to
\begin{lp}\label{op:direct_br2} 
    \find{   \z^*=\arg\min_{\z} \left\{\left\|\z-\left(\x+\frac{\e}{2c}\right)\right\|_2^2\right\}  } 
    \stt
    \con{  \z\in\D.}
\end{lp}

Let $\z'=\x+\frac{\e}{2c}$. If $\z'$ is feasible, i.e., $f(\z')\leq 0$, we have $\z^*=\z'$. Otherwise, by definition $\z^*$ is the $\ell_2$ projection of $\z'$ on to the decision boundary of $f$. 

If $\x\notin \D$, staying at $\x$ yield $0$ utility for $\u$. As a result, $\z^*=\P_f(\z')$ only when $\z'$ yields a negative objective value in OP \eqref{op:direct_br}. Otherwise, $\z^*=\x$.

\end{proof}






\section{Definition of Pseudo-Dimension}\label{app:pdim}

\begin{definition}\label{def:PDim}
(Pollard's Pseudo-Dimension) A class $\cF$ of real-valued functions $P$-shatters a set of points $\cX = \{x_1, x_2,\cdots, x_n\}$ if there exists a set of thresholds $\gamma_1, \gamma_2,\cdots, \gamma_n$ such that for every subset $T\subseteq \mathcal{X}$, there exists a function $f_T\in \mathcal{F}$ such that $f_T(x_i)\geq \gamma_i$ if and only if $x_i\in T$. In other words, all $2^n$ possible above/below patterns are achievable for targets $\gamma_1,\cdots, \gamma_n$. The pseudo-dimension of $\mathcal{F}$, denoted by $\PDim(\mathcal{F})$, is the size of the largest set of points that it $P$-shatters.
\end{definition}

\section{Omitted Proofs in Section \ref{sec:gen}}\label{app:gen_proof}

\subsection{Proof of Proposition \ref{thm:pdim_l_sd}}

In this section we present the proof of Proposition \ref{thm:pdim_l_sd}, which upper bounds the $\PDim$ of the distortion mitigation (DM) function class $\H_{\F}$ given some example moderator function class $\F$.
\begin{proof}

In this proof we derive Pseudo-dimension upper bounds for the three cases listed. 

\textbf{Case-1:} When $\F=\{f(\x)=\II[\w^{\top}\x+b\leq 0]|(\w, b)\in \RR^{d+1}\}$ is the linear functions class, we can without loss of generality let $\|\w\|_2=1$ since a simultaneous rescaling of $\w,b$ does not change the nature of the moderator function and its induced strategic responses. Next, we derive the DM class $\H_{\F}$ as follows. First of all, plugging in the expression of $f$ into the result of Proposition \ref{prop:best_response_z}, we obtain a user $(\x,c)$'s best response as the following:

\begin{enumerate}
    \item if $\w^{\top}\cdot \left(\x+\frac{\e}{2c}\right)+b\leq 0$, $\z^*=\x+\frac{\e}{2c}$.
    \item if $\w^{\top}\cdot \left(\x+\frac{\e}{2c}\right)+b > 0$ and $\w^{\top}\x+b\leq 0$, $\z^*=P_{\f}(\x+\frac{\e}{2c})$ which has the following closed-form expression
    \begin{align}\notag
        \z^* &=\x+\frac{\e}{2c}-\frac{\w^{\top}(\x+\frac{\e}{2c})\w}{\w^{\top}\w}-\frac{b\w}{\w^{\top}\w} \\ 
        &=\x+\frac{\e}{2c}-\w^{\top}(\x+\frac{\e}{2c})\w-b\w.
    \end{align}
\end{enumerate}

By Definition \ref{def:social_distortion} and \ref{def:social_distortion_save}, we can compute each function $h\in\H_{\F}$ as
\begin{align}\notag
    h(f,\e;\x,c) = & D(\perp;(\x,c),\e)-D(f;(\x,c),\e) \\\notag
    =&\II[\w^{\top}\x+b\leq 0]\cdot\II\left[\w^{\top}\x+b> -\frac{\w^{\top}\e}{2c}\right]\cdot\left(\left\|\frac{\e}{2c}\right\|_2^2 - \left\|\frac{\e}{2c}-\w^{\top}(\x+\frac{\e}{2c})\w-b\w\right\|_2^2\right) \\ \label{eq:222}
    =& \II[\w^{\top}\x+b\leq 0]\cdot\II\left[\w^{\top}\x+b> -\frac{\w^{\top}\e}{2c}\right]\cdot\left[-(\w^{\top}\x+b)^2+\left(\frac{\w^{\top}\e}{2c}\right)^2\right].
\end{align}
    
  For the ease of notation, let's define $\tilde{\x}=(\x,\frac{1}{2c})\in \RR^{d+1}$ be the extended feature vector for any user data $(\x, c)$. By the definition of Pseudo-dimension, for any function class $\F=\{f(\tilde{\x};\w, b)|\w, b\}$, the $PDim(\F)$ can be reduced to the VC dimension of the epigraph of $\F$, i.e., 
    \begin{equation}
        PDim(\F)=VCdim(\{h(\tilde{\x},y)=\text{sgn}(f(\tilde{\x})-y)|f\in\F, y\in[-1,1]\}).
    \end{equation}
    Let's define the following three function classes 
    \begin{align*}
         \H_1(d)&=\left\{h_1(\x,c;\w,b)=-(\w^{\top}\x+b)^2+\frac{(\w^{\top}\e)^2}{4c^2}\Big|(\x,c)\in \RR^{d+1},(\w,b)\in\RR^{d+1}\right\} \\ 
                &= \left\{h_1(\tilde{\x};\w,b)=-(\w^{\top}\tilde{\x}_{1:d}+b)^2+(\w^{\top}\e)^2\tilde{x}_{d+1}^2\Big|\tilde{\x}\in \RR^{d+1},(\w,b)\in\RR^{d+1}\right\},\\ 
        \H_2(d)&=\left\{h_2(\x,c;\w,b)=\II\left[\w^{\top}\left(\x+\frac{\e}{2c}\right)+b\geq 0\right]\Big|(\x,c)\in \RR^{d+1},(\w,b)\in\RR^{d+1}\right\} \\
               &=\left\{h_2(\tilde{\x};\w,b)=\II\left[\w^{\top}\tilde{\x}_{1:d}+(\w^{\top}\e)\tilde{x}_{d+1}+b\geq 0\right]\Big|\tilde{\x}\in \RR^{d+1},(\w,b)\in\RR^{d+1}\right\}, \\
        \H_3(d)&=\left\{h_3(\x;\w,b)=\II\left[\w^{\top}\x+b\leq 0\right]\big|\x\in \RR^d,(\w,b)\in\RR^{d+1}\right\},
    \end{align*}
    where $\x_{1:d}$ denotes the vector that contains the first $d$-dimension of $\x$.

Since $h(\w,b;\tilde{\x})=h_1*h_2*h_3(\tilde{\x};\w,b)$, the Pseudo-dimension of $\H_{\F}$ can be upper bounded by the following
    \begin{equation}\label{eq:31}
        PDim(\H_{\F})\leq VCdim\left(\left\{h(\tilde{\x},y)=\text{sgn}\left(\prod_{i=1}^3h_i(\tilde{\x})-y\right)\Bigg|h_i\in\H_i, y\in[-1,1],1\leq i\leq 3\right\}\right),
    \end{equation}
    where the inequality holds because \begin{equation*}
       \left\{\text{sgn}\left(h_1*h_2*h_3(\tilde{\x};\w,b)-y\right)|(\w,b)\in\RR^{d+1}\right\} \subseteq \left\{\text{sgn}\left(\prod_{i=1}^3h_i(\tilde{\x})-y\right)\Bigg|h_i\in\H_i, 1\leq i\leq 3\right\}.
    \end{equation*}

    For any function classes $\F,\G$, define $\F\otimes\G=\{f*g|f\in\F,g\in \G\}$. Then Eq. \eqref{eq:31} suggests that in order to upper bound $PDim(\H_{\F})$, it suffices to upper bound $PDim(\H_1 \otimes\H_2\otimes \H_3)$. Thanks to Lemma \ref{lm:PDim_product} which establishes the $PDim$ of the product of two function classes, this can be done by upper bounding $PDim(\H_1),PDim(\H_2),PDim(\H_3)$ separately. In the following we derive the $PDim$ for each function class $\H_i, i=1,2,3$ and then use Lemma \ref{lm:PDim_product} to conclude the proof. 

    {\bf Deriving $PDim(\H_3)$:} First of all, since the Pseudo-Dimension for a binary value function class is exactly the VC dimension of the corresponding real-valued function class inside the indicator function, we immediately obtain 
    \begin{equation}
        PDim(\H_3)\leq d+1,
    \end{equation}
    which is the VC dimension for a $d$ dimensional linear function class. 

    {\bf Deriving $PDim(\H_2)$:} For $\H_2$, it holds that
    \begin{align}\notag
        \H_2(d) &=\left\{h_2(\tilde{\x};\w,b)=\II\left[\w^{\top}\tilde{\x}_{1:d}+(\w^{\top}\e)\tilde{x}_{d+1}+b\geq 0\right]\Big|\tilde{\x}\in \RR^{d+1},(\w,b)\in\RR^{d+1}\right\} \\\label{eq:53}
        &\subset \left\{h_2(\tilde{\x};\w,w_{d+1},b)=\II\left[\w^{\top}\tilde{\x}_{1:d}+w_{d+1}\tilde{x}_{d+1}+b\geq 0\right]\Big|\tilde{\x}\in \RR^{d+1},(\w,w_{d+1},b)\in\RR^{d+2}\right\} \\ \notag
        & = \left\{h_2(\tilde{\x};\tilde{\w},b)=\II\left[\tilde{\w}^{\top}\tilde{\x}+b\geq 0\right]\Big|\tilde{\x}\in \RR^{d+1},(\tilde{\w},b)\in\RR^{d+2}\right\},
    \end{align}
    where the subset relationship \eqref{eq:53} holds because we relax the parameter $\w^{\top}\e$ correlated with $\w$ to an additional independent parameter $w_{d+1}\in \RR$. This implies that $\H_2(d)$ is a subclass of indicator functions induced by the $d+1$ dimensional linear class. As a result, the Pseudo-Dimension of $\H_2$ must be upper bounded by $d+2$.
    
    {\bf Deriving $PDim(\H_1)$:} To derive $PDim(\H_1)$, we first apply the same trick to relax $\w^{\top}\e$ to an independent parameter $w_{d+1}$:
    \begin{align}\notag
        \H_1(d) &= \left\{h_1(\tilde{\x};\w,b)=(\w^{\top}\tilde{\x}_{1:d}+b)^2-(\w^{\top}\e)^2\tilde{x}_{d+1}^2\Big|\tilde{\x}\in \RR^{d+1},(\w,b)\in\RR^{d+1}\right\} \\ 
        & \subset \left\{h_1(\tilde{\x};\tilde{\w},b)=(\w^{\top}\tilde{\x}_{1:d}+b)^2-w_{d+1}^2\tilde{x}_{d+1}^2\Big|\tilde{\x}\in \RR^{d+1},(\tilde{\w},b)\in\RR^{d+2}\right\}\triangleq \tilde{\H}_1(d),
    \end{align}
    where $\tilde{\w}=(\w,w_{d+1})$. Note that each instance in $\tilde{\H}_1(d)$ can be rewritten as 
    \begin{equation}
    h_1(\tilde{\x};\tilde{\w},b)=\sum_{i=1}^{d+1}\sum_{j=1}^{d+1}\psi_{ij}(\tilde{\w},b)\phi_{ij}(\tilde{\x})+\psi_{0}(\tilde{\w},b)\phi_0(\tilde{\x}),
    \end{equation}
    where 
    \begin{align*}
        &\psi_{ij}=w_iw_j, \phi_{ij}=x_ix_j, 1\leq i,j\leq d, \\ 
        &\psi_{d+1,j}=bw_j, \psi_{i,d+1}=bw_i, \phi_{d+1,j}=x_j, \phi_{i,d+1}=x_i, 1\leq i,j\leq d, \\ 
        & \psi_{d+1,d+1}=b^2, \phi_{d+1,d+1}=1, \psi_{0}=-w_{d+1}^2, \phi_{0}=\tilde{x}_{d+1}^2.
    \end{align*}
    Let $\phi(\tilde{\x})=(\phi_{ij}(\tilde{\x}),\phi_0(\tilde{\x}))_{1\leq i\leq d+1,1\leq j\leq d+1, i+j<2d+2}\in \RR^{(d+1)^2}$. Consider the linear class $\L_{(d+1)^2}=\{l(\x;\w,b)=\sum_{i=1}^{(d+1)^2} w_ix_i+b|(\w,b)\in \RR^{(d+1)^2+1}\}$. Then for any $X_m=(\x_1,\cdots,\x_m),y\in[-1,1]$, the label patterns of $(\text{sgn}(h_1(\x_i)-y))_{i=1}^m$ that $f_1$ can achieve on $X_m$ can also be achieved by $\L_{(d+1)^2}$ on $(\phi(\x_1),\cdots,\phi(\x_m))$. Therefore, by definition we have 
    \begin{align}\notag
        PDim(\H_1) &=VCdim(\{h(\x,y)=\text{sgn}(h_1(\x)-y)|h_1\in\tilde{\H}_1(d), y\in[-1,1]\}) \\ \notag
        & \leq VCdim(\{h(\x,y)=\text{sgn}(l(\x)-y)|l\in\L_{(d+1)^2}, y\in[-1,1]\}) \\ \label{eq:42}
        & = PDim\left(\L_{(d+1)^2}\right) \leq (d+1)^2+1,
    \end{align}
    where inequality \eqref{eq:42} holds by Theorem 11.6 (The Pseudo-Dimension of linear class) from \cite{anthony1999neural}.
    
    Finally, from Lemma \ref{lm:PDim_product} we conclude that 
    
    \begin{align*}
        PDim(\H_1\otimes\H_2)
        & < 3(1+\log PDim(\H_1)+\log PDim(\H_2))(PDim(\H_1)+PDim(\H_2)) \\ 
        & < 3(1+3\log (d+2))(d+2)(d+3)<12(d+3)^2\log(d+2),
    \end{align*}
    and therefore
    \begin{align*}
        PDim(\H_{\F})&\leq PDim(\H_1\otimes\H_2\otimes \H_3) \\
        & < 3(1+\log PDim(\H_1\otimes\H_2)+\log PDim(\H_3))(PDim(\H_1\otimes\H_2)+PDim(\H_3)) \\ 
        & = 3(1+\log(12(d+3)^2\log(d+2))+\log(d+1))(12(d+3)^2\log(d+2)+d+1) \\ 
        & < 3(1+6\log(d+3))(13(d+3)^2\log(d+3)) \\ 
        & < 273(d+3)^2\log^2(d+3).
    \end{align*}

\textbf{Case-2:} When $\F$ is a piece-wise linear function class with each instance constitutes $m$ linear functions, i.e.,
    $$\F=\{f(\x)=\II[\w_1^{\top}\x+b_1\leq 0]\vee\cdots\vee\II[\w_m^{\top}\x+b_m\leq 0|(\w_i, b_i)\in \RR^{d+1},1\leq i\leq m\},$$

We first upper bound the VC-dimension of $\F$. If we take $\F,\F'$ to be binary function classes in \eqref{eq:335} from Lemma \ref{lm:PDim_product}, the Pdim of $\F$ coincides with the VCDim of $\F$. Hence, for the composition of $m$ linear functions with each VCDim bounded by $d+1$, the VCDim of the new function is bounded by 
$$\tilde{\O}(3(3(3(d+d)+d)+d)+...)\leq \tilde{\O}(3^m\cdot md)\leq \tilde{\O}(d\cdot3^m),$$
where $\tilde{\O}$ denotes the big $O$ notation omitting the $\log$ terms.

By Definition \ref{def:social_distortion} and \ref{def:social_distortion_save}, we can compute each function $h\in\H_{\F}$ as
\begin{align}\notag
    & h(f,\e;\x,c) \\ \notag
    = & D(\perp;(\x,c),\e)-D(f;(\x,c),\e) \\\notag
    =&\prod_{i=1}^m\II[\w_i^{\top}\x_i\leq 0]\cdot\Bigg(\left\|\frac{\e}{2c}\right\|_2^2- \min\Bigg\{\min_{1\leq i\leq m}\left\{\II\left[\w_i^{\top}\x+b_i> -\frac{\w_i^{\top}\e}{2c}\right]\cdot \left\|\P^{(i)}\left(\x+\frac{\e}{2c}\right)-\x\right\|_2^2\right\}, \\ 
    & \min_{1\leq i,j\leq m}\left\{\II\left[\w_i^{\top}\x+b_i> -\frac{\w_i^{\top}\e}{2c}\right]\cdot\II\left[\w_j^{\top}\x+b_j> -\frac{\w_j^{\top}\e}{2c}\right]\cdot \left\|\P^{(i,j)}\left(\x+\frac{\e}{2c}\right)-\x\right\|_2^2\right\}, \\ 
    & \min_{1\leq i,j,k\leq m} \left\{\prod_{t\in\{i,j,k\}}\II\left[\w_t^{\top}\x+b_t> -\frac{\w_t^{\top}\e}{2c}\right]\cdot \left\|\P^{(i,j,k)}\left(\x+\frac{\e}{2c}\right)-\x\right\|_2^2\right\}, \cdots \Bigg\}\Bigg),
\end{align}
where operator $\P^{(i,j)}$ denotes the $L_2$-projection onto the intersection of hyperplanes $l_i:\w_i^{\top}\x+b_i\leq 0$ and $l_j:\w_j^{\top}\x+b_j\leq 0$, and $\P^{(i,j,k)}$ denotes the $L_2$-projection onto the intersection of hyperplanes $l_i,l_j,l_k$, and so on. This is because there are in total $2^m$ possibilities in terms of the location of $\x+\frac{\e}{2c}$'s $L_2$ projection to the convex region denoted by $f(\x)=1$, as $\P_{f}(\x+\frac{\e}{2c})$ can be on each hyperplane $l_i$, or on the intersections of any two $l_i,l_j$, or on the intersections of any three $l_i,l_j,l_k$, and so on.

Note that each $\P_{\f}^{(r)}$ (i.e., the projection onto the intersection of $r$ hyperplanes) has a closed-form which is a rational function with polynomial at most $r$. As a result, the Pseudo-dimension of the function class containing all functions like $\left\|\P^{(r)}\left(\x+\frac{\e}{2c}\right)-\x\right\|^2_2$ is at most $\O((rd)^r)$, since $rd$ is the number of parameters each function has. Apply Eq. \eqref{eq:334} in Lemma \ref{lm:PDim_product}, we know the Pdim of the class $\II\left[\w_t^{\top}\x+b_t> -\frac{\w_t^{\top}\e}{2c}\right]\cdot \left\|\P^{(r)}\left(\x+\frac{\e}{2c}\right)-\x\right\|_2^2$ is at most $\tilde{\O}(3(d+(rd)^r))$. Continue to apply Eq. \eqref{eq:335}, we can upper bound the $\min$ of at most $C_m^r$ functions with a Pdim of each at most $\tilde{\O}(3(d+(rd)^r))$ as $3^{C_m^r}\cdot C_m^r\cdot \tilde{\O}(3(d+(rd)^r))$, and the Pdim upper bound for the min of $(m+1)$ functions with a Pdim of each at most $3^{C_m^r}\cdot C_m^r\cdot \tilde{\O}(3(d+(rd)^r)), 1\leq r\leq m$ is 
\begin{align*}
    Pdim(\H_{\F})&\leq \O(d\cdot3^m)\cdot \tilde{\O}\left( 3^m\sum_{r=0}^m 3^{C_m^r}\cdot C_m^r\cdot \tilde{\O}(3(d+(rd)^r))\right) \\
    &\leq \tilde{\O}(d\cdot3^m)\cdot \tilde{\O}(3^{2^m})\cdot\tilde{\O}((dm)^m) \leq \tilde{\O}(d^{m+1}\cdot3^{2^m}).
\end{align*}

\textbf{Case-3:} When $\F=\{f(\x)=\II[\w^{\top}\phi(\x)+b\leq 0]|(\w, b)\in \RR^{d+1}\}$ is the linear functions class with some feature transformation mapping $\phi$, the best response of $(\x, c)$ is the solution of the following OP 

\begin{lp}
\find{  \z^*=\arg\min_{\z} \left\{\left\|\z-\left(\x+\frac{\e}{2c}\right)\right\|_2^2\right\} } 
\stt
\con{  \w^{\top}\phi(\z)+b\leq 0.}
\end{lp}
Since $\phi$ is invertible, it is equivalent to 
\begin{lp}\label{op:345}
\find{  \z^*=\arg\min_{\z} \left\{\left\|\phi^{-1}(\y)-\phi^{-1}\left(\phi\left(\left(\x+\frac{\e}{2c}\right)\right)\right)\right\|_2^2\right\} } 
\stt
\con{  \w^{\top}\y+b\leq 0.}
\end{lp}
And also because $\phi$ preserves the order of pair-wise $L_2$ distance of any set of points, the solution of OP \eqref{op:345} is equivalent to the solution of 
\begin{lp}
\find{  \z^*=\arg\min_{\z} \left\{\left\|\y-\phi\left(\left(\x+\frac{\e}{2c}\right)\right)\right\|_2^2\right\} } 
\stt
\con{  \w^{\top}\y+b\leq 0.}
\end{lp}
As a result, we can compute $\z^*$ the same way as in Case-1 and the VCDim, PDim upper bounds in Case-1 still applies.

\end{proof}

\subsection{Lemmas used in the proof of \Cref{thm:pdim_l_sd} and their proofs}
\begin{lemma}\label{lm:PDim_product}
    For any class of real valued functions $\F,\F'\subseteq\{f:\RR^d\rightarrow [-1,1]\}$ and binary valued functions $\G\subseteq\{g:\RR^d\rightarrow \{0,1\}\}$, define $\F\otimes\G=\{h(\x)=f(\x)\times g(\x)|f\in\F,g\in \G\}$, and $\F\ominus\F'=\{h(\x)=\min\{f(\x),f'(\x)\}|f\in\F,f'\in \F'\}$. Then, it holds that 
    \begin{align}\label{eq:334}
       & PDim(\F\otimes\G) <3(1+\log d_{\F}d_{\G})(d_{\F}+d_{\G}), \\ \label{eq:335}
       & PDim(\F\ominus\F') <3(1+\log d_{\F}d_{\F'})(d_{\F}+d_{\F'}),
    \end{align}
    where $d_{\F}=PDim(\F), d_{\F'}=PDim(\F'), d_{\G}= PDim(\G)$.
\end{lemma}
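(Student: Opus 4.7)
The plan is to convert pseudo-dimension to the VC-dimension of an epigraph class and then count realizable sign patterns via Sauer-Shelah. Concretely, I would invoke the standard identity
\[
\PDim(\mathcal{H}) \;=\; \VCDim\bigl(\{(x,y)\mapsto \II[h(x)>y]:h\in\mathcal{H}\}\bigr),
\]
already used for each $\H_i$ in the proof of Proposition~\ref{prop:pdim_l_sd}, and assume a shattered configuration $(x_1,y_1),\ldots,(x_n,y_n)$ for the epigraph class of $\F\otimes\G$; the objective is to upper bound $n$.

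The key observation for $\F\otimes\G$ is that since every $g\in\G$ takes values in $\{0,1\}$, at each index $i$ the label $\II[f(x_i)g(x_i)>y_i]$ equals the constant $\II[0>y_i]$ whenever $g(x_i)=0$ and equals $\II[f(x_i)>y_i]$ whenever $g(x_i)=1$. Hence the full labeling vector is uniquely determined by the pair $\bigl((g(x_i))_{i=1}^n,\,(\II[f(x_i)>y_i])_{i=1}^n\bigr)$. Applying the Sauer-Shelah lemma to $\G$ and to the epigraph class of $\F$, the number of realizable first components is at most $(en/d_\G)^{d_\G}$ and the number of realizable second components is at most $(en/d_\F)^{d_\F}$, so the shattering hypothesis forces
\[
2^n \;\leq\; \bigl(en/d_\F\bigr)^{d_\F}\bigl(en/d_\G\bigr)^{d_\G}.
\]
Taking $\log_2$ yields the transcendental inequality $n\leq (d_\F+d_\G)(1+\log_2 n)-d_\F\log_2 d_\F-d_\G\log_2 d_\G$, from which the claimed bound $n<3(1+\log(d_\F d_\G))(d_\F+d_\G)$ follows by plugging in the candidate value $n_0=3(1+\log(d_\F d_\G))(d_\F+d_\G)$ and simplifying with $d_\F+d_\G\leq d_\F d_\G+1$ (so $\log(d_\F+d_\G)\leq 1+\log(d_\F d_\G)$) together with $\log_2(1+t)\leq t$. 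This extraction is routine but is the most delicate part of the argument, since the counting itself is immediate.

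For $\F\ominus\F'$, the same strategy applies after the observation $\II[\min(f(x),f'(x))>y] = \II[f(x)>y]\cdot\II[f'(x)>y]$: a labeling is completely determined by the joint pair of $\F$- and $\F'$-epigraph patterns at the thresholds $(y_i)$, giving $2^n\leq(en/d_\F)^{d_\F}(en/d_{\F'})^{d_{\F'}}$ and the identical arithmetic finishes. The main obstacle is therefore only the final transcendental-inequality step; the structural decomposition itself is a direct consequence of the binary/min identities that cleanly separate the contributions of the two constituent classes.
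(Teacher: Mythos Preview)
Your proposal is the paper's argument: reduce $\PDim$ to the VC-dimension of the epigraph class, observe that each epigraph label of $f\cdot g$ (resp.\ $\min(f,f')$) is determined by the pair of constituent labels, multiply the two Sauer--Shelah counts, and extract $n$ from the resulting inequality $2^n\le(\cdots)$. The only divergence is in the final arithmetic. You work with the $(en/d)^d$ form of Sauer and the inequality $\log_2(1+t)\le t$, but that inequality fails for $t\in(0,1)$, and at the boundary case $d_\F=d_\G=1$ your counting gives only $2^n\le e^2n^2$, which is still satisfied at $n=6$ and hence does not deliver $n<6=3(1+\log 1)\cdot 2$. The paper instead uses the sharper Sauer estimate $\Pi(m,\F)\le\max\{m{+}1,\,m^{d_\F}\}$, splits into the three cases $(d_1\ge 2,d_2\ge 2)$, $(d_1\ge 2,d_2=1)$, $(d_1=d_2=1)$, and in the main case invokes a short auxiliary lemma of the form ``$2^m>m^a$ once $m>\tfrac{1.59}{\ln 2}\,a(\ln a-\ln\ln 2)$''. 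Your structural decomposition is correct; to recover the stated constant $3$ you need that tighter Sauer form together with the small-dimension case check, exactly as the paper does.
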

\begin{proof}
    By definition, $PDim(\F)$ can be reduced to the VC dimension of the epigraph of $\F$, i.e., 
    \begin{equation}
        PDim(\F)=VCdim(\{h(x,y)=\text{sgn}(f(x)-y)|f\in\F, y\in[-1,1]\}).
    \end{equation}
    Let $\X=\RR^d\times [-1,1]$, consider an arbitrary set of points $X_m=\{(\x_i, y_i)\in \X\}_{i=1}^m$ with cardinality $m$ and any binary hypothesis class $\H \subseteq \{h:\X\rightarrow \{0, 1\}\}$. Define the maximum shattering number
$$\Pi(m, \H)=\max_{X_m\in \X^{m}}\left\{\text{Card} \{ (h(\x_1,y_1),\cdots, h(\x_m,y_m)) \in \{0,1\}^m| h\in \H \}\right\}$$ as the total number of label patterns that $\H$ can possibly achieve on $\X$. Next we upper bound the $\Pi(m, \{f*g|f\in\F,g\in \G\})$. For any fixed $X_m=\{(\x_i, y_i)\in \X\}_{i=1}^m\in \X^m$, we claim that the binary variable $\text{sgn}(f(\x_i)g(\x_i)-y_i)$ is determined by three binary variables $\text{sgn}(f(\x_i)-y_i)$ and $g(\x_i)$. This is because:
\begin{enumerate}
    \item when $y_i\geq 0$, $f(\x_i)g(\x_i)\geq y_i$ holds if and only if $f(\x_i)\geq y_i$ and $g(\x_i)=1$.
    \item when $y_i< 0$, $f(\x_i)g(\x_i)\geq y_i$ holds if and only if $f(\x_i)\geq y_i$ and $g(\x_i)=1$, or $g(\x_i)=0$.
\end{enumerate}
Therefore, any possible label pattern 
$(\text{sgn}(f(\x_1)g(\x_1)-y_1),\cdots, \text{sgn}(f(\x_m)g(\x_m)-y_m)) \in \{0,1\}^m$ is completely determined by the label patterns $(\text{sgn}(f(\x_1)-y_1),\cdots, \text{sgn}(f(\x_m)-y_m))$ and $(g(\x_1),\cdots, g(\x_m))$. As a result, it holds that 
\begin{align*}
    & \text{Card} \{(\text{sgn}(f(\x_1)g(\x_1)-y_1),\cdots, \text{sgn}(f(\x_m)g(\x_m)-y_m))| f\in \F,g\in \G \} \\ \leq & \text{Card} \{(\text{sgn}(f(\x_1)-y_1),\cdots, \text{sgn}(f(\x_m)-y_m))| f\in \F \} \times \text{Card} \{(g(\x_1),\cdots, g(\x_m))| g\in \G \},
\end{align*}
which implies 
\begin{equation}\label{eq:69}
    \Pi(m, \F\otimes\G)\leq \Pi(m, \F)\times \Pi(m, \G).
\end{equation}
Using the same argument, we can similarly show that 
\begin{equation}\label{eq:69_}
    \Pi(m, \F\ominus\F')\leq \Pi(m, \F)\times \Pi(m, \F').
\end{equation}
Therefore, to show Eq. \eqref{eq:334} and \eqref{eq:335}, it suffices to show Eq. \eqref{eq:334} starting from Eq. \eqref{eq:69}. According to Sauer–Shelah Lemma \citep{sauer1972density,shelah1972combinatorial}, we have 
\begin{equation}\label{eq:SS_lemma}
    \Pi(m, \F) \leq \sum_{i=0}^{VC(\F)} \binom{m}{i}\leq \max\{m+1, m^{d_{\F}}\},
\end{equation}
where $VC(\F)$ denotes the VC dimension of class $\{\text{sgn}(f(x)-y)|f\in \F\}$, which is also the Pseudo dimension of $\F$ (i.e., $d_{\F}$). And the second inequality of Eq. \eqref{eq:SS_lemma} holds because 
\begin{enumerate}
    \item when $d\geq 3$, we have $$\left(\frac{d}{m}\right)^d \sum_{i=0}^d \binom{m}{i}\leq \sum_{i=0}^d \left(\frac{d}{m}\right)^i \binom{m}{i}\leq \sum_{i=0}^m \left(\frac{d}{m}\right)^i \binom{m}{i}=\left(1+\frac{d}{m}\right)^m \leq e^d,$$ and therefore $\sum_{i=0}^d \binom{m}{i}\leq \left(\frac{em}{d}\right)^d < m^d$.
    \item when $d=2$, we have $$\sum_{i=0}^d \binom{m}{i} = 1+m+\frac{m(m-1)}{2}\leq m^2, \forall m\geq 2.$$
    \item when $d=1$, we have $\sum_{i=0}^d \binom{m}{i} = 1+m$.
\end{enumerate}
From Eq. \eqref{eq:69} we know $\F\otimes\G$ has bounded Pseudo dimension. Suppose $PDim(\F\otimes\G)=d$, then by definition, there exists a set $\Y$ with cardinality $d$ such that $\Pi(d, \F\otimes\G)=2^{d}$. Therefore, from Eq \eqref{eq:SS_lemma} and \eqref{eq:69} we have when $d\geq 2$,

\begin{equation}\label{eq:81}
    2^{d} = \Pi(d, \F\otimes\G) \leq \Pi(d, \F)\times \Pi(d, \G) \leq  \max\{d+1, {d}^{d_{\F}}\} \cdot  \max\{d+1, {d}^{d_{\G}}\} .
\end{equation}

For simplicity of notations we denote $d_1=d_{\F},d_2=d_{\G}$ and without loss of generality assume $d_1 \geq d_2$. To complete our proof we need the following auxiliary technical Lemma \ref{lm:2mma}, whose proof can be found in Appendix.

\begin{lemma}\label{lm:2mma}
    For any $a\geq 2$ and $m>\frac{1.59a}{\ln 2} (\ln a-\ln\ln 2) $, it holds that $2^m > m^a$.
\end{lemma}

Now we are ready to prove our claim. Consider the following situations:
\begin{enumerate}
    \item if $d_1\geq 2, d_2\geq 2$, from Eq \eqref{eq:81} we obtain $$2^{d}\leq d^{d_1+d_2}.$$ However, from Lemma \ref{lm:2mma} we know that when $d>\frac{1.59}{\ln 2}(d_1+d_2)(\ln (d_1+d_2)-\ln\ln 2)$, $2^{d}> d^{d_1+d_2}$ always holds. Hence, in this case we conclude $d \leq \frac{1.59}{\ln 2}(d_1+d_2)(\ln (d_1+d_2)-\ln\ln 2) < 2.3(d_1+d_2)(\ln (d_1+d_2)+0.37)$.
    \item if $d_1\geq 2, d_2=1 $, from Eq \eqref{eq:81} we obtain $$2^{d}\leq (d+1)d^{d_1}<d^{d_1+2}.$$ From Lemma \ref{lm:2mma} we know that when $d>\frac{1.59}{\ln 2}(d_1+2)(\ln (d_1+2)-\ln\ln 2)$, $2^{d}> d^{d_1+2}$ always holds. Hence, in this case we conclude $d \leq \frac{1.59}{\ln 2}(d_1+2)(\ln (d_1+2)-\ln\ln 2)< 2.3(d_1+2)(\ln (d_1+2)+0.37)$.
    \item if $d_1= d_2=1 $, from Eq \eqref{eq:81} we obtain $$2^{d}\leq (d+1)^2.$$ Since $2^{m} > (m+1)^2$ holds for any $m \geq 6$, we conclude that $d \leq 5 < \frac{1.59}{\ln 2}(2+2)(\ln (2+2)-\ln\ln 2)$. 
\end{enumerate}

Combining the three cases, we conclude that 
\begin{align*}
PDim(\F\otimes\G)&<2.3(\max\{2,d_1\}+\max\{2,d_2\})(\log (\max\{2,d_1\}+\max\{2,d_2\})+0.37) \\ 
&< 3(1+\log d_1d_2)(d_1+d_2)
\end{align*}

\end{proof}

Now we are ready to prove~\Cref{thm:generalization_bound}.

\begin{proof}[Proof of~\Cref{thm:generalization_bound}]
Classic results from learning theory~\cite{pollard1984convergence} 
show the following generalization guarantees: Suppose $[0,H]$ is the range of functions in hypothesis class $\H$. For any $\delta\in(0,1)$, and any distribution $\mathcal{D}$ over $\mathcal{X}$, with probability $1-\delta$ over the draw of $\mathcal{S}\sim \mathcal{D}^n$, for all functions $h\in \H$, the difference between the average value of $h$ over $\mathcal{S}$ and its expected value gets bounded as follows:
\[\left|\frac{1}{n}\sum_{x\in \mathcal{S}}h(x)-\Ex_{y\sim \mathcal{D}}[h(y)]\right| = \mathcal{O}\left(H\sqrt{\frac{1}{n}\left(\PDim(\H)+\ln\left(\frac{1}{\delta}\right)\right)}\right)\]

Substituting $\H$ with the class of social distortion mitigation functions $\H_{\F}=\{h(f;\x,c)|f\in\F\}$ induced by some moderator function class $\F={f}$ gives:
\[\left|\frac{1}{n}\sum_{(\x_i,c_i)\in \mathcal{S}}h(f;\x_i,c_i)-\EE_{(\x,c)\sim \X\times \C}[h(f;\x,c)]\right| = \mathcal{O}\left(H\sqrt{\frac{1}{n}\left(\PDim(\H_{\F})+\ln\left(\frac{1}{\delta}\right)\right)}\right)\]

Therefore, for a training set $S$ of size $\O\left(\frac{H^2}{\eps^2}[\PDim(\H_{\F})+\ln(1/\delta)]\right)$, the empirical average social distortion and the average social distortion on the distribution are within an additive factor of $\eps$.

Next, we show for any class $\F$, distribution $\D$ over $\X\times \C$, if a large enough training set $S$ is drawn from $\D$, then with high probability, every $f\in \F$, filters out approximately the same fraction of examples from the training set and the underlying distribution $\D$ had these examples manipulated to their ideal location($\z'$). In order to prove this, we use uniform convergence guarantees. 


Let $\X'=\{\x+\frac{\e}{2c}\mid \x \in \X, c\in \C\}$ and $\Y=\{0\}$ and consider the joint distribution $\D'$ on $\X'\times \Y$. 
A hypothesis $f\in \F$ incurs a mistake on an example $(\x+\frac{\e}{2c},y)$ if it labels it as positive or equivalently if it filters it out. By uniform convergence guarantees, given a training sample $S'$ of size $\O\left(\frac{1}{\eps^2}[VCDim(\F)+\log(1/\delta)]\right)$, with probability at least $1-\delta$ for every $f\in \F$, $|\err_{\D'}(f)-\err_{S'}(f)|\leq \eps$. This is equivalent to saying the fraction of points filtered out by $f$ from $\D'$ and $S'$ are within an additive factor of $\eps$.

Here, $\F$ is the class of moderator functions, and $\H_{\F}$ is the class of social distortion mitigation functions induced by $\F$. Now, given a training set $S$ of size 
$\O(\frac{1}{\eps^2}[H^2(\PDim(\H_{\F})+\ln(1/\delta))+VCDim(\F)])$, by an application of union bound, for every $f\in\F$, the probability that the average social distortion of $f$ on $S$ and $\D$ differ by more than $\eps$ or the fraction of filtered points 
differ by more than $\eps$ is at most $2\delta$. 
This completes the proof.
\end{proof}

\section{Omitted Proofs in Section \ref{sec:computation}}

\subsection{Proof of Lemma \ref{lm:social_distortion_op}}\label{app:proof_lm_1}
\begin{proof}
Plugin the expression of $\x^*=\Delta(\x,c;\e,f)$ given by Proposition \ref{prop:best_response_z} and note that $\Delta(\x_i,c_i;\e,\perp)=\x_i+\frac{\e}{2c_i}$, we get a closed form of $DM(f;\X)$ as shown below:

\begin{align}\notag
    DM((\w, b);\X) &= \sum_{i=1}^n \left\{D(\perp;(\x_i,c_i),\e)-D(\w,b;(\x_i,c_i),\e)\right\} \\\notag
    &= \sum_{i \in I_0(\w, b)} \left\|\frac{\e}{2c_i}\right\|_2^2-\sum_{i \in I_1(\w, b)} \left\|\frac{\e}{2c_i}\right\|_2^2 -\sum_{i \in I_2(\w, b)} \left\|\frac{\e}{2c_i}-\frac{\w^{\top}(\e+2c_i\x_i)\w}{2c_i\w^{\top}\w}-\frac{b\w}{\w^{\top}\w}\right\|_2^2 \\ \notag
    &=\frac{1}{4}\sum_{i\in I_0}\frac{1}{c_i^2}-\frac{1}{4}\sum_{i\in I_1}\frac{1}{c_i^2}-\sum_{i\in I_2}\left\{\frac{1}{4c_i^2}+\frac{1}{\w^{\top}\w}\left[(\w^{\top}\x_i+b)^2-\left(\frac{\w^{\top}\e}{2c_i}\right)^2\right]\right\} \\ \label{eq:obj_distor}
     &=\sum_{i \in I_2}\frac{1}{\w^{\top}\w}\left[-(\w^{\top}\x_i+b)^2+\left(\frac{\w^{\top}\e}{2c_i}\right)^2\right].
\end{align}
Here the set $I_0=\{i\in[n]:\w^{\top}\x_i+b\leq 0\}$ contains the indices of all users who are marked as non-problematic and $I_1=\{ i\in[n]: \w^{\top}\cdot \left(\x_i+\frac{\e}{2c_i}\right)+b\leq 0 \}\cap I_0,I_2=\{i\in[n]: \w^{\top}\cdot \left(\x_i+\frac{\e}{2c_i}\right)+b> 0\}\cap I_0$.

Since a re-scaling of the vector $(\w, b)$ does not change the value of the RHS of Eq. \eqref{eq:obj_distor}, we may without loss of generality assume $\|\w\|_2=1$ and the $DM$ function becomes 
\begin{equation}\label{eq:71}
    DM((\w, b);\X)=\sum_{i \in I_2}\left[-(\w^{\top}\x_i+b)^2+\left(\frac{\w^{\top}\e}{2c_i}\right)^2\right].
\end{equation}
Next, we argue that maximizing Eq. \eqref{eq:71} under the constraint $\|\w\|_2=1$ is equivalent to maximizing it under the constraint $\|\w\|_{\infty}=1$. This is because, for any solution $(\w^*, b^*)$ that yields the optimal value of Eq. \eqref{eq:71} with $\|\w^*\|_2=1$, re-scaling $(t\w^*, tb^*)$ such that $\|t\w^*\|_{\infty}=1$ would also yield the largest value of the RHS of Eq. \eqref{eq:obj_distor}. And on the other hand, any solution $(\w^*, b^*)$ that yields the optimal value of Eq. \eqref{eq:71} with $\|\w^*\|_{\infty}=1$, we can also re-scale it such that $\|\w^*\|_2=1$. This suggests that we can equivalently consider the objective function given in Eq. \eqref{eq:71} and replacing the original constraint $\|\w^*\|_2=1$ with $\|\w^*\|_{\infty}=1$.

\end{proof}

\subsection{Proof of Theorem \ref{thm:NP_hardness}}\label{app:thm_hardness}

\begin{proof}

For arbitrary $n$ points $\y_1,\cdots, \y_n\in\RR^d$ and $K<n$, construct an OP \eqref{op:sd_eps} instance by letting $\e = (0, \cdots, 0, 1)$, $c_1=\cdots=c_{2n}=\frac{1}{2\epsilon}>0$, and
\begin{equation*}
        \x_i = \begin{cases}
        ( y_{i,1}, \cdots, y_{i,d}, 0), \quad 1\leq i\leq n, \\ 
        (0, \cdots, 0, 0), \quad \quad n+1\leq i\leq 2n.
    \end{cases}
\end{equation*} 
Then solving OP \eqref{op:sd_eps} is equivalent to
\begin{lp}\label{op:sd_direction_w_infty_eps} 
    \find{   \arg\max_{\w,b} \left\{ \sum_{\{1\leq i\leq 2n: \epsilon(1-\w^{\top}\e)\leq\w^{\top}\x_i+b\leq  0\}} \left[-(\w^{\top}\x_i+b)^2+\epsilon^2\left(\w^{\top}\e\right)^2\right]\right\}  } 
    \stt
    \con{  \sum_{i=1}^{2n}\II\left[\w^{\top}\left(\x_i\right)+b\leq -\epsilon\w^{\top}\e\right]\geq 2n-K,}
    \con{ -1\leq w_j \leq 1, \forall 1\leq j\leq d+1,}
    \con{ \w\neq\bm{0}.}
\end{lp}

We argue that that the optimal $\w^*$ for OP \eqref{op:sd_direction_w_infty_eps} must satisfy $w^*_{d+1}=1$, because for any $\w$ with $w_{d+1}<1$, we have $\epsilon(1-\w^{\top}\e)>0$ and thus the set $\{1\leq i\leq 2n: \epsilon(1-\w^{\top}\e)\leq\w^{\top}\x_i+b\leq  0\}$ is empty. This means that the objective value of OP \eqref{op:sd_direction_w_infty_eps} is zero. However, any $\w,b$ with $w_{d+1}=1$ such that $\w^{\top}\x_i+b=0$ for some $i$ yields an objective value $\epsilon^2>0$. As a result, the optimal $\w^*$ that maximize the objective of OP \eqref{op:sd_direction_w_infty_eps} must satisfy $w^*_{d+1}=1$. Therefore, we can without loss of generality let $\w^{\top}\e=1$ and then solving OP \eqref{op:sd_direction_w_infty_eps} is equivalent to 
\begin{lp}\label{op:sd_direction_w_infty_eps_wd_} 
    \find{   \arg\max_{\w,b} \left\{ \sum_{\{1\leq i\leq 2n: \w^{\top}\x_i+b= 0\}} \left[\epsilon^2\right]\right\}  } 
    \stt
    \con{  \sum_{i=1}^{2n}\II[\w^{\top}\x_i+b \leq -\epsilon]\geq 2n-K,}
    \con{ -1\leq w_j \leq 1, \forall 1\leq j\leq d+1,}
    \con{ \w\neq\bm{0}.}
\end{lp}
Let $\tilde{\w}=(w_1,\cdots,w_d)$ be the first $d$ dimensions of $\w$, then solving OP \eqref{op:sd_direction_w_infty_eps_wd_} is equivalent to solving the following
\begin{lp}\label{op:sd_direction_w_infty_eps_wd_b0} 
    \find{   \arg\max_{\tilde{\w},b} \left\{ n\cdot\II[b= 0]+\sum_{\{1\leq i\leq n\}} \II\left[\tilde{\w}^{\top}\y_i+b=0\right]\right\}  } 
    \stt
    \con{  \sum_{i=1}^n\II[\tilde{\w}^{\top}\y_i+b\leq -\epsilon]\geq n-K, }
    \con{ -1\leq \tilde{w}_j \leq 1, \forall 1\leq j\leq d,}
        \con{ \tilde{\w}\neq\bm{0}.}
\end{lp}
We argue that the optimal solution of OP \eqref{op:sd_direction_w_infty_eps_wd_b0} must satisfy $b^*=0$. This is because when $b=0$, any $\tilde{\w}$ that satisfies $\tilde{\w}^{\top}\y_i+b=0$ for some $i$ yields an objective value at least $n+1$. However, if $b\neq 0$, any $\tilde{\w}$ in the feasible region would yield an objective value at most $n$. As a result, solving OP \eqref{op:sd_direction_w_infty_eps_wd_b0} is equivalent to solving the following
\begin{lp}\label{op:ksh} 
    \find{   \arg\max_{\w} \left\{ \sum_{\{1\leq i\leq n\}} \II\left[\w^{\top}\y_i=0\right]\right\}  } 
    \stt
    \con{ \sum_{i=1}^n\II[\w^{\top}\y_i\leq -\epsilon]\geq n-K,}
    \con{ -1\leq w_j \leq 1, \forall 1\leq j\leq d,}
        \con{ \w\neq\bm{0}.}
\end{lp}

Next, we show that optimizing~\Cref{op:ksh} is an NP-hard problem by showing the following decision problem that we call $\eps$-\emph{maximum feasible linear subsystem (MAX-FLS) with mandatory constraints} is NP-hard. 
Given $\varepsilon>0$ and a system of linear equations, with a mandatory set of constraints $A\x = b_1$ where $b_1=[0,0,\cdots,0]$ and an optional set of constraints $A\x \leq b_2$ where $b_2=[-\varepsilon,\cdots,-\varepsilon]$, and $A$ is of size $d\times n$ and integers $1\leq p\leq d$ and $0\leq q\leq d$, does there exist a solution $\x\in R^n$ satisfying at least $p$ optional constraints while violating at most $q$ mandatory constraints? Our proof is inspired by~\citet{amaldi1995complexity} that showed MAX-FLS is NP-hard, and we show that even when adding a set of mandatory constraints, it remains NP-hard.

In order to prove NP-hardness, we show a polynomial-time reduction from the known NP-complete \emph{Exact 3-Sets Cover} that is defined as follows. Given a set $S$ with $|S|=3n$ elements and a collection $C=\{C_1,\cdots,C_m\}$ of subsets $C_j\subseteq S$ with $|C_j|=3$ for $1\leq j \leq m$, does $C$ contain an exact cover, i.e. $C'\subseteq C$ such that each element $s_i$ of $S$ belongs to exactly one element of $C'$?

Let $(S,C)$ be an arbitrary instance of \emph{Exact 3-Sets Cover}. We will construct a particular instance of \emph{$\varepsilon$-MAX-FLS with mandatory constraints} denoted by $(A,b,p,q,\varepsilon)$ such that there exists an \emph{Exact 3-Sets Cover} if and only if the answer to the \emph{$\varepsilon$-MAX-FLS with mandatory constraints} instance is affirmative.

We construct an instance of \emph{$\varepsilon$-MAX-FLS with mandatory constraints} as follows. There exists one variable $x_j$ for each subset $C_j\in C$, $1\leq j \leq m$.~\Cref{constraint:optional-coverage-epsilon,constraint:optional-intergal1-epsilon,constraint:optional-integral2-epsilon} are optional and~\Cref{constraint:mandatory-coverage-epsilon,constraint:mandatory-intergal1-epsilon,constraint:mandatory-integral2-epsilon} are mandatory constraints.~\Cref{constraint:optional-coverage-epsilon} are coverage constraints to make sure each element in $S$ is covered. Constant $a_{i,j}$ is equal to $1$ if $s_i\in C_j$ and is equal to $0$ otherwise. 
\begin{align}
&\sum_{j=1}^{|C|}a_{i,j}x_j-x_{m+1}=0&\quad \forall 1\leq i \leq 3n\label{constraint:optional-coverage-epsilon}\\
&x_j - x_{m+1}= 0&\quad \forall 1\leq j \leq m\label{constraint:optional-intergal1-epsilon}\\
&x_j = 0 &\quad \forall 1\leq j \leq m\label{constraint:optional-integral2-epsilon}\\
&\sum_{j=1}^{|C|}a_{i,j}x_j-x_{m+1}\leq -\varepsilon&\quad \forall 1\leq i \leq 3n\label{constraint:mandatory-coverage-epsilon}\\
&x_j-x_{m+1} \leq -\varepsilon&\quad \forall 1\leq j \leq m\label{constraint:mandatory-intergal1-epsilon}\\
&x_j \leq -\varepsilon &\quad \forall 1\leq j \leq m\label{constraint:mandatory-integral2-epsilon}
\end{align}

We set $p=3n+m$ and $q=2n+2m$. Now, in any 
solution $\x$, we must have $x_{m+1}\neq 0$, since $x_{m+1}=0$ implies that $x_j=0$ for all $1\leq j\leq m$ in order to have at least $3n+m$ optional constraints satisfied. However, if all $x$ variables are set as $0$, then none of the mandatory constraints would be satisfied which is a contradiction.

Now, given any exact cover $C'\subseteq C$ of $(S,C)$, the vector $\x$ defined by:
\[x_j=
\begin{cases}
    -\varepsilon &\quad\text{if } C_j\in C' \text{ or }j=m+1\\
    0 &\quad\text{otherwise }
\end{cases}\]

satisfies all equations of type~\Cref{constraint:optional-coverage-epsilon} and exactly $m$ of~\Cref{constraint:optional-intergal1-epsilon,constraint:optional-integral2-epsilon}. Therefore, $\x$ satisfies $3n+m$ optional constraints in total. However, all constraints of type~\Cref{constraint:mandatory-coverage-epsilon} are violated. When $x_j=-\varepsilon$, the mandatory constraint $x_j\leq -\varepsilon$ is satisfied, and $x_j-x_{m+1}\leq -\varepsilon$ is violated. However, when $x_j=0$, both constraints $x_j-x_{m+1}\leq -\varepsilon, x_j\leq -\varepsilon$ are violated. Since $|C'|=n$, the total number of mandatory constraints violated equals $3n+2m-n=2n+2m$.

Conversely, suppose that we have a solution $\x$ that satisfies at least $3n+m$ optional constraints and violates at most $2n+2m$ mandatory constraints. By construction, since $\x$ satisfies $3n+m$ optional constraints, it satisfies all constraints of type~\Cref{constraint:optional-coverage-epsilon} and exactly $m$ constraints among~\Cref{constraint:optional-intergal1-epsilon,constraint:optional-integral2-epsilon} (recall that we are interested in non-trivial solutions, therefore $x_{m+1}\neq 0$). This implies each $x_j$ is either equal to $x_{m+1}$ or $0$. Now, consider the subset $C'\subseteq C$ defined by $C_j\in C'$ if and only if $x_j=x_{m+1}$. This gives an exact cover of $(S,C)$. 

Finally, we can conclude that given solution $\x$ that satisfies at least $3n+m$ optional constraints and violates at most $2n+2m$ mandatory constraints, the subset $C'\subseteq C$ defined by $C_j\in C'$ if and only if $x_j=x_{m+1}$ is an exact cover of $(S,C)$.

\end{proof}

\subsection{Proof of Proposition \ref{prop:monotone_lambda}}\label{app:prop_lambda}

\begin{proof}
OP \eqref{lp:social_distortion_op_soft} is equivalent to the following problem
\begin{lp}
    \find{ \arg\min_{\w,b} \{-DM(\w,b)+\lambda P(\w,b)\}.} 
\end{lp}
And according to the definitions we have 
\begin{align}\label{eq:w1b1}
    & (\w_1, b_1)=\arg\min_{\w,b} \{-DM(\w,b)+\lambda_1 P(\w,b)\},\\ \label{eq:w2b2}
    & (\w_2, b_2)=\arg\min_{\w,b} \{-DM(\w,b)+\lambda_2 P(\w,b)\}.
\end{align}

As a result,
\begin{align}\notag
   -DM(\w_2,b_2)+\lambda_2 P(\w_2,b_2) \leq & -DM(\w_1,b_1)+\lambda_2 P(\w_1,b_1) \\\notag = & -DM(\w_1,b_1)+\lambda_1 P(\w_1,b_1) + (\lambda_2-\lambda_1)P(\w_1,b_1) \\ \label{eq:prop4_1}
   \leq & -DM(\w_2,b_2)+\lambda_1 P(\w_2,b_2) + (\lambda_2-\lambda_1)P(\w_1,b_1) 
\end{align}
Rearranging terms in Eq. \eqref{eq:prop4_1} we have $(\lambda_2-\lambda_1)(P(\w_2,b_2)-P(\w_1,b_1))\leq 0$, which implies $P(\w_1,b_1)\leq P(\w_2,b_2)$.

Now we prove $DM(\w_1,b_1)\leq DM(\w_2,b_2)$. First of all, when $\lambda_2=0$, we have $(\w_2,b_2)=\arg\max_{\w,b} DM(\w,b)$ and therefore $DM(\w_1,b_1)\leq DM(\w_2,b_2)$ must hold. When $\lambda_2>0$, it also holds that 
\begin{align}\notag
   \frac{\lambda_1}{\lambda_2}\left(-DM(\w_2,b_2)+\lambda_2 P(\w_2,b_2)\right) \leq & \frac{\lambda_1}{\lambda_2}\left(-DM(\w_1,b_1)+\lambda_2 P(\w_1,b_1)\right) \\\notag = & -DM(\w_1,b_1)+\lambda_1 P(\w_1,b_1) -\frac{\lambda_1-\lambda_2}{\lambda_2}DM(\w_1,b_1) \\ \label{eq:prop4_2}
   \leq & -DM(\w_2,b_2)+\lambda_1 P(\w_2,b_2) -\frac{\lambda_1-\lambda_2}{\lambda_2}DM(\w_1,b_1)
\end{align}
Rearranging terms in Eq. \eqref{eq:prop4_2} we have $(\lambda_1-\lambda_2)(DM(\w_1,b_1)-DM(\w_2,b_2))\leq 0$, which implies $DM(\w_1,b_1)\leq DM(\w_2,b_2)$.

\end{proof}

\section{Additional Experiments}\label{app:exp}

\subsection{Additional details of experiment}\label{app:exp_detail}

We solve OP \eqref{op:sd_y} by setting the $\ell_2$ penalty $P(\w,b)=\left\|g(\w,b;U,\e)\right\|_2$, i.e., $P_i(\w,b)=\II[y_i>0]\cdot y_i^2$, where $a_i=\frac{\w^{\top}\e}{2c_i}, y_i=\w^{\top}\x_i+b+a_i$ as defined in the constraints of OP \eqref{op:sd_y}. The reason we choose such a form is because the resultant social good loss function $l$ can preserve continuity and first-order differentiable property, paving the way for gradient-based method. To further differentiate $l$ at $y_i=0$, we apply spline interpolation at $y_i=0.1$ to round the corner at $y_i=0$ while ensuring that $l\rightarrow0$ as $y_i\rightarrow -\infty$. The surrogate loss function $l$ for each user $(\x_i, c_i)$ after such regularizations compared with the true loss $l$ is illustrated in the leftmost panel of Figure \ref{fig:experiment}, and we can observe that the minimum of $l$ is achieved when $y_i=a_i$, i.e., when the original feature $\x$ is on the decision boundary of filter $f$. In addition, for a larger penalty $\lambda$, moving across the boundary (i.e., $y_i$ moving to the right side of $a_i$) would incur a larger and more rapidly increasing loss. The objective is thus the sum of these surrogate losses at all points $(\x_i, c_i)$. To account for the boundary constraint $-1\leq w_i\leq 1$, we employ the standard projected gradient descent.

The surrogate loss $\tilde{l}(y,a;\epsilon)$ for a single point $(\x, c)$ we use in the experiment is given by the following explicit form:

\begin{equation}
    \tilde{l}(y, a;\epsilon,\lambda)=\begin{cases}
        & \frac{(1-\epsilon^2)^2 a^3}{2\epsilon y-4a(1-\epsilon)+3a(1-\epsilon)^2}, \quad y<(1-\epsilon)a,\\ 
        & y^2-2ay, \quad (1-\epsilon)a\leq y\leq a, \\ 
        & \lambda(y-a)^2-a^2, \quad y> a, \\ 
    \end{cases}
\end{equation}
where $y=\w^{\top}\x+b+a, a=\frac{\w^{\top}\e}{2c}$, as shown in Figure \ref{fig:surrogate_loss}. In our experiments we choose $\epsilon=0.9$ and use different $\lambda$ ranging from $0.1$ to $100$. 

\begin{figure}[!ht]
    \centering
    \includegraphics[width=1\textwidth]{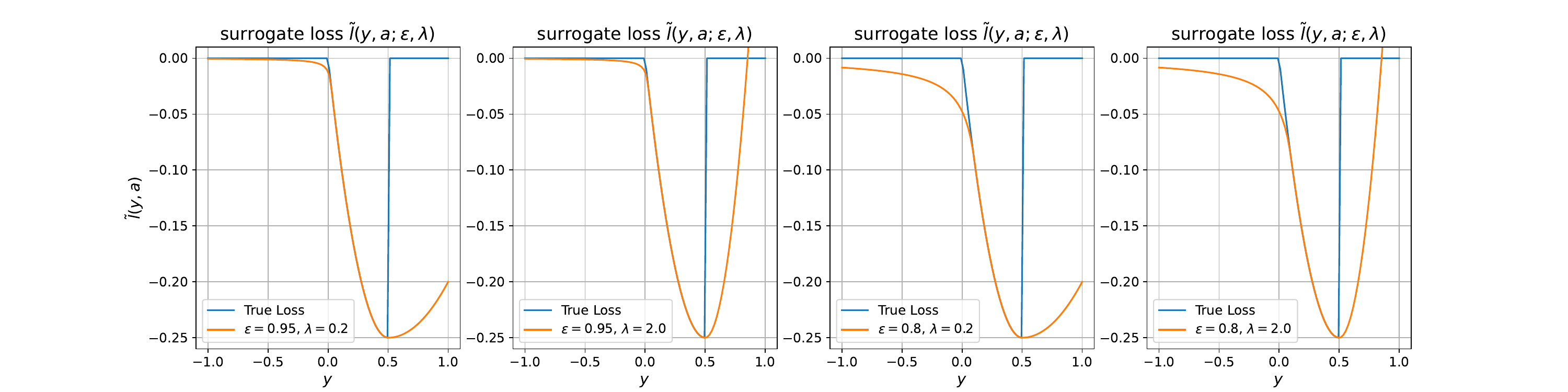}
    \caption{The constructed quasi-convex single point surrogate loss function with different smoothing parameter $\epsilon$ and soft freedom of speech penalty strength $\lambda$. In this illustration we set $a=0.5$.}
    \label{fig:surrogate_loss}
\end{figure}

Then we use projected gradient descent (PGD) to solve the following OP \ref{op:sd_y_sur} with the exact gradient of $\tilde{l}$ w.r.t. $\w$ and $b$. The learning rate of PGD is set to $0.1$ and the maximum iteration steps is set to $2000$.

\begin{lp}\label{op:sd_y_sur} 
    \find{   \arg\min_{\w,b} \left\{ \sum_{1\leq i\leq n} \tilde{l}(y_i, a_i)\right\}  } 
    \stt
    \con{y_i=\w^{\top}\x_i+b+a_i, 1\leq i\leq n,}
    \con{a_i=\frac{\w^{\top}\e}{2c_i}, 1\leq i\leq n,}
    \con{ -1\leq w_i\leq 1, 1\leq i\leq n.}
\end{lp}

\subsection{Additional result under different dimension $d$}

We also plot the trade-offs achieved by the computed optimal linear moderators across different dimensions $d$, with the results shown in Figure \ref{fig:tradeoff_d=2,10}. As the figure illustrates, higher dimensions introduce more noise into the results, but the same underlying insights remain observable.

\begin{figure}[!ht]
    \centering
    \includegraphics[width=0.49\textwidth]{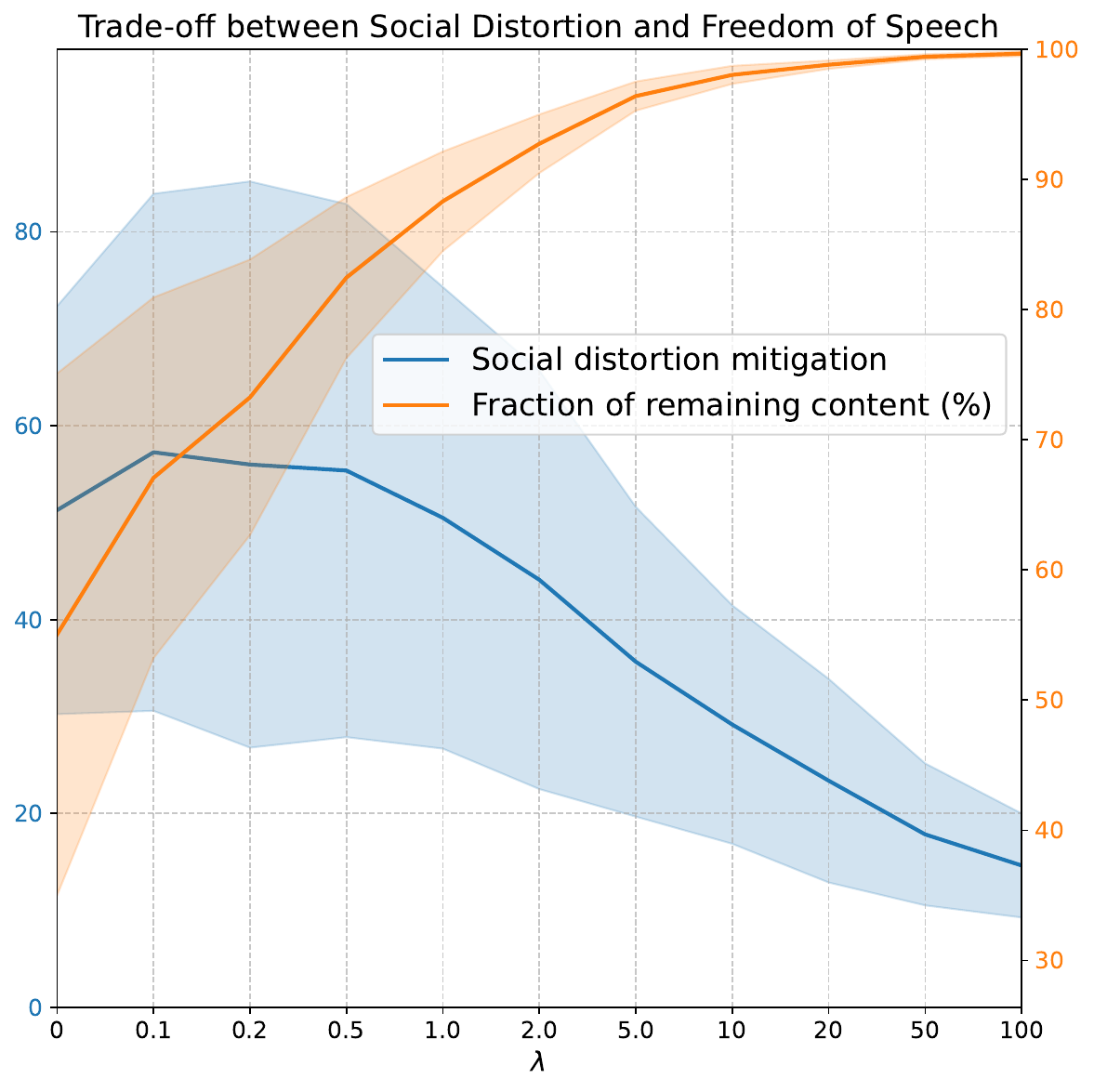}
    \includegraphics[width=0.49\textwidth]{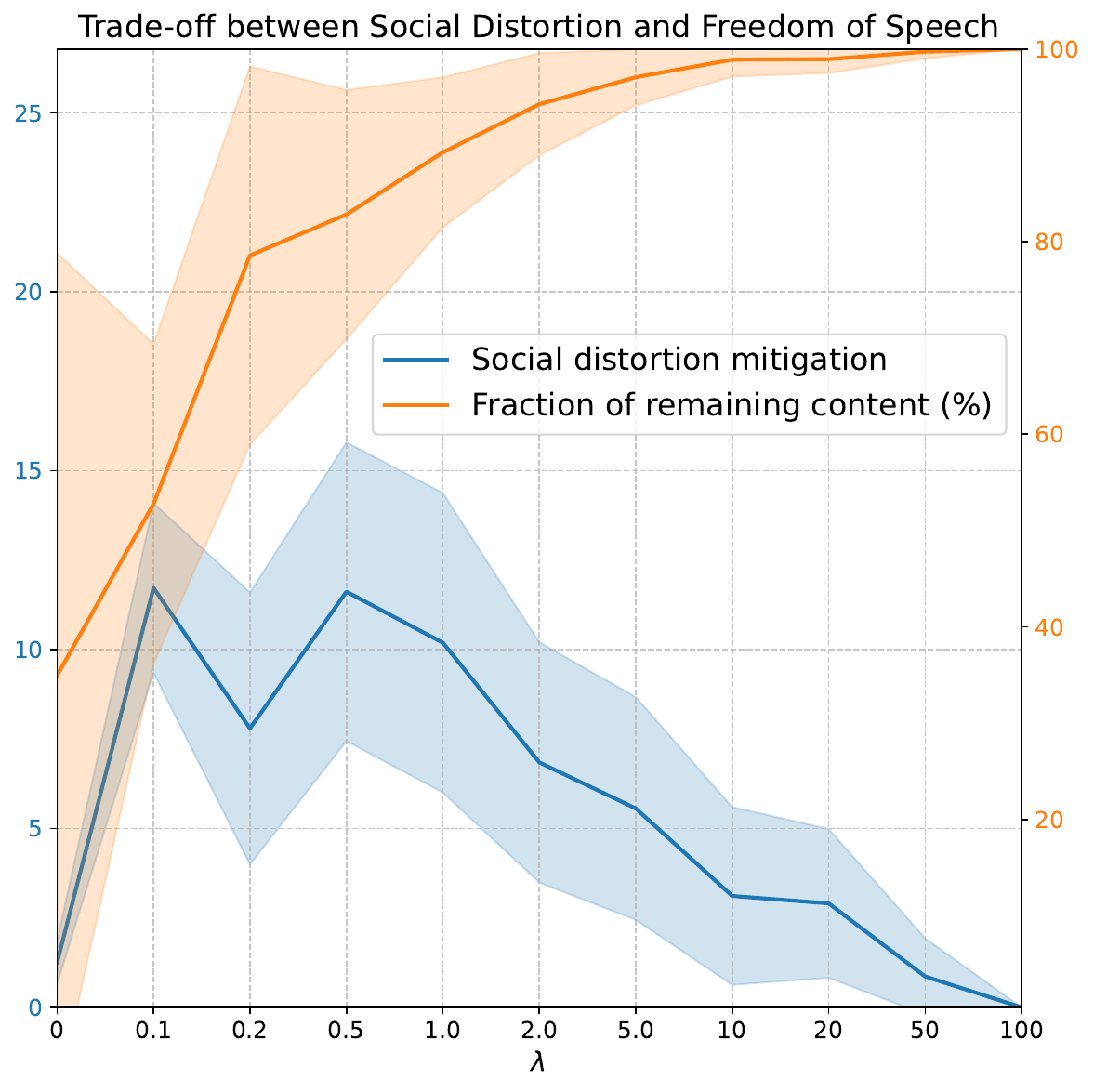}
    \caption{Social distortion mitigation (blue) and the fraction of remaining content on the platform (yellow) incurred by the computed moderator obtained under different $\lambda\in[0.1,100]$. Left: $d=2$, Right: $d=10$. Error bars are $1\sigma$ region based on results from 20 independently generated datasets.}
    \label{fig:tradeoff_d=2,10}
\end{figure}

\end{document}